\renewcommand{\theequation}{\arabic{section}.\arabic{equation}}
\newcommand{\rom}[1]{\uppercase\expandafter{\romannumeral #1\relax}}
\newenvironment{bluetext}{\color{black}}{\ignorespacesafterend}
\numberwithin{equation}{section}
\newcommand{\argmin}{\operatornamewithlimits{argmin}}
\newcommand{\argmax}{\operatornamewithlimits{argmax}}
\newcommand*\Let[2]{\State #1 $\gets$ #2}
\def\thetheorem {{\arabic{section}.\arabic{theorem}}}
\def\tr{\mathrm{tr}}
\def\k{\kappa}
\def\b{\beta}
\def\g{\gamma}
\def\lam{\lambda}
\def\p{\phi}
\def\v{\varepsilon}
\def\p{\rho}
\def\star{\circ}
\def\A{\mathbb A}
\def\E{\mathbb E}
\def\R{\mathbb R}
\def\S{\mathbb S}
\def\mM{\mathcal M}
\def\l{\left}
\def\r{\right}
\def\ll{\left\lVert}
\def\rl{\right\rVert}
\def\lv{\left\lvert}
\def\rv{\right\rvert}
\def\ds{\displaystyle}
\def\h{\hat}
\def\t{\tilde}
\def\tcb{\textcolor{black}}
\newcommand{\hZ}{\h{Z}}
\newcommand{\hY}{\h{Y}}
\newcommand{\hz}{\h{z}}
\newcommand{\hp}{\h{p}}
\newcommand{\hC}{\h{C}}
\newcommand{\hD}{\h{D}}
\newcommand{\hB}{\h{B}}
\newcommand{\hG}{\h{G}}
\newcommand{\hH}{\h{H}}
\newcommand{\hA}{\h{A}}
\begin{document}
 \title{
 Operator Shifting for Model-based Policy Evaluation
\thanks{Received date, and accepted date (The correct dates will be entered by the editor).}}


        \author{Xun Tang\thanks{Institute for Computational and Mathematical Engineering, Stanford
        University, Stanford, CA 94305, (xuntang@stanford.edu)} 
        \and Lexing Ying\thanks{Department of
        Mathematics and Institute for Computational and Mathematical Engineering, Stanford University,
        Stanford, CA 94305, (lexing@stanford.edu).} 
        \and Yuhua Zhu\thanks{Department of Mathematics,
        Stanford University, Stanford, CA 94305, (yuhuazhu@stanford.edu).}}

         \pagestyle{myheadings} \markboth{Operator Shifting for Policy Evaluation}{X. Tang, L. Ying, AND Y. Zhu} \maketitle

          \begin{abstract}
        In model-based reinforcement learning, the transition matrix and reward vector are often estimated
        from random samples subject to noise. Even if the estimated model is an unbiased estimate of the
        true underlying model, the value function computed from the estimated model is biased. We introduce
        an operator shifting method for reducing the error introduced by the estimated model. When the
        error is in the residual norm, we prove that the shifting factor is always positive and upper
        bounded by $1+O\l({1}/{n}\r)$, where $n$ is the number of samples used in learning each row of the
        transition matrix. We also propose a practical numerical algorithm for implementing the operator
        shifting.
        \end{abstract}
        
\begin{keywords}
  Operator shifting, Model-based Reinforcement Learning, policy evaluation, noisy matrices
\end{keywords}

\begin{AMS}
 90C40, 15B51
\end{AMS}

\section{Introduction}
\label{sec:Intro}

Reinforcement learning (RL) has received much attention following recent successes, such as AlphaGo
and AlphaZero \cite{silver2016mastering, silver2017mastering}. One of the fundamental problems of RL
is policy evaluation \cite{sutton2018reinforcement}. When the transition dynamics are unknown, one
learns the dynamics model from observed data in model-based RL.  However, even if the learned model
is an unbiased estimate of the true dynamics, the policy evaluation under the learned model is
biased. The question of interest in this paper is whether one can increase the accuracy of the
policy evaluation given an estimated dynamics model.


We consider a discounted Markov decision process (MDP) $\mM = (\S,\A, P, r, \g)$ with discrete state
space $\S$ and discrete action space $\A$. $\lv \S \rv$ and $|\A|$ are used to denote the size of
$\S$ and $\A$, respectively. $P$ is a third-order tensor, where for each action $a\in\A$,
$P^a\in\R^{\lv \S \rv\times\lv \S \rv}$ is the transition matrix between the states. $r$ is a
second-order tensor that $r_{s,a}$ is the reward at state $s\in\S$ if action $a\in\A$ is
taken. Finally, $\g\in(0,1)$ is the discount factor.  A policy $\pi$ is a second-order tensor,
where for each state $s\in\S$, $\pi_s$ represents the probability distribution over $\A$.  At each
time step $t$, one observes a state $s_t\in\S$ and takes an action $a_t\in\A$ according to the
policy $\pi_{s_t}$. The environment returns the next state $s_{t+1}$ according to the distribution
$P^{a_t}_{s_t,\cdot}$ and an associated reward $r_{s_t,a_t}$. The state value function
$v^\pi\in\R^{\lv \S \rv}$ is the expected discounted cumulative reward if one starts from an initial
state $s$ and follows a policy $\pi$, i.e., the $s$-th component is 
\[v^\pi_s = \underset{\substack{a_t\sim\pi_{s_t}\\
  s_{t+1}\sim P^{a_t}_{s_t,\cdot}}}{\E}\l[\sum_{t\geq0} \g^tr_{s_t,a_t} | s_0 = s\r].\]

Given a policy $\pi$, the goal of policy evaluation in MDP is to solve for $v^\pi$. Let
$b^\pi\in\R^{\lv \S \rv}, P^\pi\in\R^{\lv \S \rv\times\lv \S \rv}$ be the reward vector and
the transition matrix under policy $\pi$, i.e.,
\begin{equation}\label{def bpi}
  b^\pi = \sum_{a} r_{sa}\pi^a_s, \quad P^\pi = \sum_a P^a_{ss'}\pi^a_s.    
\end{equation}
The value function $v^\pi$ satisfies the Bellman equation \cite{sutton2018reinforcement} $\left(I-\g
P^{\pi}\right)v^{\pi} = b^{\pi}$.  For notational simplicity, we drop the dependency on $\pi$ and
write this system as
\begin{equation}\label{eq: origin}
    (I - \g P)v = b.
\end{equation}

In practice, the true transition matrix $P$ and the reward vector $b$ are often inaccessible. In the
model-based RL, one approximates the transition matrix $P$ and the reward vector $b$ by the
empirical data $\hat{P}$ and $\hat{b}$ estimated from samples, respectively
\cite{levine2013guided,deisenroth2011pilco,sutton1991dyna,watter2015embed, oh2015action}. A naive
approach is to solve
\begin{equation}\label{eq: def of V}
  \l(I-\g\hat{P} \r) \hat{v} = \hat{b}.
\end{equation}
Even if $\hat{P}$ and $\hat{b}$ are unbiased estimates for $P$ and $b$, $\hat{v} =
\l(I-\g\hat{P} \r)^{-1} \hat{b}$ is a biased estimate for $v$, i.e.,
$\E_{\hat{P},\hat{b}}\hat{v}\neq v$.


The operator shifting idea was introduced in \cite{etter2020operator,etter2021operator} to
address this issue. The paper \cite{etter2020operator} considers the noisy symmetric elliptic
systems, while the follow-up paper \cite{etter2021operator} addresses the asymmetric setting under
the assumption that $\h{b}$ is isotropic, i.e., $\E[\h{b} \h{b}^\top]=I$. However, this
isotropic condition often fails to hold in RL. In this paper, we extend the operator shifting
framework to general MDPs of form \eqref{eq: origin}. When applying this framework to the MDP
setting, we add an appropriately chosen matrix $\hat{K}$ to the operator $\l(I-\g \hat{P} \r)^{-1}$
so that the shifted estimate $\t{v}= \l[\l(I-\g \hat{P} \r)^{-1} - \b\hat{K}\r]\h{b}$ is a better
estimate than $\hat{v}$ in the sense that,
\begin{equation}\label{eq: goal in intro}
    \E_{\h{P},\h{b}} \ll\t{v} - v\rl^2 < \E_{\h{P},\h{b}} \ll \h{v} - v\rl^2
\end{equation} 
for a certain norm $\ll \cdot \rl$. 





\paragraph{Contributions.}
We derive a stable shifted operator for model-based policy evaluation without assumptions on
the underlying transition dynamics or reward vectors.  When the approximated transition matrix
$\h{P}$ follows the multinomial distribution and $n$ samples are used to learn each row of the
transition matrix $P$,
\begin{itemize}
\item we prove that the optimal shifting factor is always positive and upper bounded by
  $1+O\l(\frac{1}{n}\r)$ for any $P$ and $b$, which guarantees the stability of the shifted
  operator, and
\item we propose a numerical algorithm to find the optimal shifting factor, which is more
  efficient and accurate than the bootstrapping method proposed in \cite{etter2020operator}.
\end{itemize}



\paragraph{Related work.}
Our problem is a special instance of the larger field of uncertainty quantification (UQ). In most UQ
problems, one assumes that the operator (linear or non-linear) and the source term are generated
from known distributions, and the task is to estimate certain quantities (such as moments, tail
bounds) of the distribution of the solution. A large variety of numerical methods have been
developed in UQ for this purpose in the last two decades
\cite{ghanem2003stochastic,gunzburger2014stochastic,
  le2010spectral,xiu2010numerical,peherstorfer2018survey,estep2006fast,stuart2010inverse,marzouk2016introduction},
including Monte-Carlo and quasi Monte-Carlo methods
\cite{niederreiter2012monte,mishra2012sparse,barth2011multi,dick2013high, graham2011quasi},
stochastic collocation methods
\cite{babuvska2007stochastic,nobile2008sparse,xiu2005high,back2011stochastic}, stochastic Galerkin
methods \cite{babuska2004galerkin,xiu2002wiener,cohen2010convergence,najm2009uncertainty}, and etc.
The problem that we face is somewhat different: since the true $P$ and $b$ are unknown, one does not
know the distributions of the empirical data $\hat{P}$ or $\hat{b}$. As a result, the solution
relies more on statistical techniques such as shrinkage \cite{james1992estimation} rather than the
traditional UQ techniques.

\paragraph{Contents.}
Section \ref{sec: model} derives the oracle estimators and proves their lower and upper bounds.
Section \ref{sec: numerical examples} proposes a practical estimator and demonstrates its
performance with a few numerical examples. Section \ref{sec:proofs} contains all the proofs of the
main results.


\section{Operator Shifting for Policy Evaluation}
\label{sec: model}


\subsection{Problem setup.}\label{sec: formalism}
As mentioned above, $P$ and $b$ are the {\em unknown} underlying transition matrix and reward
vector, while $\h{P}$ and $\h{b}$ are unbiased estimates for $P$ and $b$, respectively. For
notational convenience, we introduce $A$ and $\h{A}$
\begin{equation}\label{def of A hatA}
  A = \l(I - \g P\r), \quad \h{A} = \l(I - \g \hat{P}\r).
\end{equation}
Since the transition dynamics is not symmetric in general, both $P$ and $A$ are non-symmetric. The norm of interest is a slightly generalized version of the residual norm
\begin{equation}\label{eq: def of norm}
  \ll x \rl^2_{M} = x^\top A^\top M Ax,
\end{equation}
where $M$ is a symmetric positive definite matrix. This paper mainly discusses two cases: (1) $M=I$, which means $\ll \cdot \rl_{M}$ is the usual residual norm, and (2) $M = A^{-\top}A^{-1}$, which means
$\ll\cdot\rl_{M}$ is the $l_{2}$ norm.

In this paper, we choose the shifting matrix $\h{K} =\h{A}^{-1}$, which implies that the
shifted estimate is $\l( 1-\b \r) \h{A}^{-1} \h{b}$. By using $\v = \l( 1-\b \r)$ instead as the
shifting parameter, one can write the above estimate as $\t{v}_\v \equiv \v \h{A}^{-1} \h{b}$, and the objective is to minimize the following mean square error over $\v$,
\begin{equation}\label{eqn: objective function for operator shifting}
  \mathrm{MSE}(\varepsilon) \equiv  \E_{\hat{P},\hat{b}}\ll v - \t{v}_\v \rl_{M}^{2}.
\end{equation}
The minimizer $\v^*$ to \eqref{eqn: objective function for operator shifting} is referred as the {\em optimal shifting factor}.

Since \eqref{eqn: objective function for operator shifting} is a quadratic minimization, one can
explicitly write out the optimal shifting factor $\v^*$:
\begin{equation}\label{eqn: eqn1 for operator shifting}
    \v^* = \frac{
    \mathbb{E}_{\hat{P},\hat{b}}\left[
    b^{\top}MA\hat{v}
    \right]
    }{
    \mathbb{E}_{\hat{P},\hat{b}}\left[
    \hat{v}^\top A^{\top}MA\hat{v}
    \right]
    },
    \quad\text{where}\; \h{v} = \hat{A}^{-1}\h{b}.
\end{equation}



The oracle estimate \eqref{eqn: eqn1 for operator shifting} is not easy to work with as it
depends on the unknown matrix $A$. Our immediate goal is to derive a closed-form approximation of
$\v^*$, which is accurate and allows for efficient implementation. To achieve this, we introduce a
second-order approximation $\v^\star$ to $\v^*$. We show that $\v^\star$ takes a simple closed-form
without approximating any expectations under the following mild assumption:
\begin{itemize}
    \item [] {\bf Assumption 1}: The $i$-th row \tcb{$\h{p}^{\top}_i$} of $\h{P}$ is an \tcb{random
      vector} $\frac{1}{n_{i}}X_i$, where $n_{i}$ is the number of samples for state \(i\) and $X_i$
      follows the multinomial distribution with \tcb{$\E[X_i] = n_{i}p^{\top}_i$}. \tcb{Moreover, \(X_{i}\) is independent from \(X_{j}\) whenever \(i \not = j\).} The \(i\)-th entry of \(\h{b}\) is an average of observed reward at state \(i\).
      
\end{itemize}
The part of Assumption 1 on the estimation of \(P\) is equivalent to that \tcb{$\h{p}^{\top}_i$} follows the
normalized multinomial distribution, which holds when a tabular maximum likelihood model
\cite{sutton1991dyna} is used to estimate the transition dynamics $P$. That is, one generates
sufficiently many transitions according to $P$ and lets $\h{P}_{ii'} = n_{ii'}/n_{i},$ where
$n_{ii'}$ denotes the number of transitions observed from $i$ to $i'$, and $n_i=\sum_{i'}n_{ii'}$.

Throughout this paper, we assume for simplicity that the number of samples $n_{i}$ of each state is
the same, i.e., for any $i \in \S$, $n_{i} \equiv n$.  The sample size $n$ plays an important role
in determining the magnitude of the operator shifting factor $\v^{*}$ and the performance of the
operator shifting algorithm. If the value of $n_i$ depends on $i\in\S$, all the theoretical
results still hold with slight modification (see Remark \ref{rmk: diff_ns} for details).


\subsection{Second-order approximation for $\v^*$.}\label{sec: 2nd order}
To simplify the discussion, we introduce $\hZ$ and $\hY$
\begin{equation}\label{def of Z Y}
  \hZ = A - \h{A} = \g \left(P - \hat{P} \right),\quad
  \hY = \hZ A^{-1} = \l( A - \h{A}\r)A^{-1},
\end{equation}
where $A$ and $\hat{A}$ are defined in \eqref{def of A hatA}. Some basic algebraic manipulations
lead to the following lemma.


\begin{lemma}\label{lemma: vstar in Y}
    When  $\ds\E\left[\hat{b}\right] = b$, the optimal shifting factor $\v^{*}$ defined in \eqref{eqn: eqn1 for
      operator shifting} has the form
    \begin{equation}\label{eq: vstar in Y}
      \v^{*}
        =
        \frac{
            \mathbb{E}_{\hat{P}}\left[
            b^{\top}M\l(I- \hY\r)^{-1}b\right]
        }{
            \mathbb{E}_{\hat{P}}\left[\tr\left(\l(\mathrm{cov}\left[\hat{b}\right] + b^{\top}b\r)\l(I- \hY\r)^{-\top}M\l(I- \hY\r)^{-1}\right)\right]
        },
    \end{equation}
    where $\hY$ is defined in \eqref{def of Z Y}.  Moreover, if the values of the reward at state $s$
    and $s'$ (i.e. $\h{b}_s$ and $\h{b}_{s'}$) are uncorrelated, the matrix $\mathrm{cov}[\h{b}]$ is
    diagonal.
\end{lemma}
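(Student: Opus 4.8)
The plan is to start from the closed form \eqref{eqn: eqn1 for operator augmentation} and rewrite both numerator and denominator in terms of $\hY$. The key algebraic fact is the identity connecting $\hat{A}$, $A$ and $\hY$: from \eqref{def of Z Y}, $\hY = \hZ A^{-1} = (A-\h{A})A^{-1} = I - \h{A}A^{-1}$, hence $\h{A}A^{-1} = I - \hY$ and therefore $A\h{A}^{-1} = (I-\hY)^{-1}$. (The inverse exists because $\h{A} = I - \g\h{P}$ with $\g\in(0,1)$ and $\h{P}$ row-stochastic, so $\h{A}$ is almost surely invertible.) Substituting $\h{v} = \h{A}^{-1}\h{b}$ and this identity into the numerator of \eqref{eqn: eqn1 for operator augmentation} turns the integrand into $b^\top M A \h{A}^{-1}\h{b} = b^\top M (I-\hY)^{-1}\h{b}$, and into the denominator it turns the integrand into $\h{b}^\top \h{A}^{-\top}A^\top M A \h{A}^{-1}\h{b} = \h{b}^\top (I-\hY)^{-\top} M (I-\hY)^{-1}\h{b}$, since $\h{A}^{-\top}A^\top = (A\h{A}^{-1})^\top = (I-\hY)^{-\top}$.

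Next I would integrate out $\h{b}$ before integrating out $\h{P}$, using $\E[\h{b}] = b$ and the independence structure of Assumption 1. For the numerator, linearity gives $\E_{\h{P},\h{b}}[b^\top M (I-\hY)^{-1}\h{b}] = \E_{\h{P}}[b^\top M (I-\hY)^{-1}\,\E[\h{b}]] = \E_{\h{P}}[b^\top M (I-\hY)^{-1}b]$, which is exactly the numerator of \eqref{eq: vstar in Y}. For the denominator, set $W := (I-\hY)^{-\top}M(I-\hY)^{-1}$, a function of $\h{P}$ only, and use $\h{b}^\top W\h{b} = \tr(W\h{b}\h{b}^\top)$ together with $\E[\h{b}\h{b}^\top] = \mathrm{cov}[\h{b}] + bb^\top$. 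The tower property then yields $\E_{\h{P},\h{b}}[\h{b}^\top W\h{b}] = \E_{\h{P}}[\tr(W\,\mathrm{cov}[\h{b}])] + \E_{\h{P}}[b^\top W b]$, which matches the denominator of \eqref{eq: vstar in Y}. Assembling the two gives \eqref{eq: vstar in Y}.

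The ``moreover'' clause is then immediate: if $\h{b}_s$ and $\h{b}_{s'}$ are uncorrelated for every $s\neq s'$, each off-diagonal entry $(\mathrm{cov}[\h{b}])_{ss'} = \mathrm{cov}(\h{b}_s,\h{b}_{s'})$ vanishes, so $\mathrm{cov}[\h{b}]$ is diagonal. I do not anticipate a genuine obstacle here — the argument is essentially bookkeeping. The only points that warrant a word of care are: (i) justifying that $\h{A}$, hence $(I-\hY)$, is almost surely invertible and that the relevant moments are finite, so the matrix manipulations and the interchange of expectation are legitimate; and (ii) the reading of $\mathrm{cov}[\h{b}]$ when $\h{b}$ and $\h{P}$ are not independent, in which case the steps above should be carried out with the conditional covariance $\mathrm{cov}[\h{b}\mid\h{P}]$ and $\E[\h{b}\mid\h{P}] = b$; under the independence in Assumption 1 the conditional and unconditional covariances coincide and $\mathrm{cov}[\h{b}]$ can be pulled out of $\E_{\h{P}}$.
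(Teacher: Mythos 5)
Your proposal is correct and follows essentially the same route as the paper: the same key identity $A\h{v}=(I-\hY)^{-1}\h{b}$, the same use of the independence of $\h{P}$ and $\h{b}$ with $\E[\h{b}]=b$, and the same bias--variance/trace decomposition of the quadratic form in $\h{b}$ (your $\E[\h{b}\h{b}^{\top}]=\mathrm{cov}[\h{b}]+bb^{\top}$ step is just a rewriting of the paper's $(b-\h{b})$ decomposition). Your added remarks on invertibility of $\h{A}$ and on conditional covariance are harmless refinements, not a different argument.
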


Next, we approximate the value of $\v^{*}$ using a Neumann expansion of the matrix $\l(I-\hY \r)^{-1}$
\begin{equation}\label{eq: neumann series}
  \l(I- \hY\r)^{-1} = I + \hY + \hY^{2} + O\l(\frac{\rho(\hY)^{3}}{1 - \rho(\hY)}\r),
\end{equation}
when the spectral radius $\rho(\hY)<1$. In fact, a modest requirement on $n$ guarantees
$\rho(\hY)<1$ with high probability, as shown in Appendix \ref{appendix: condition for convergence
  of Neumann Series}.  The denominator term in \eqref{eq: vstar in Y} admits the approximation
\begin{equation}\label{eq: inv(I-Y)'*inv(I-Y)}
    \l(I- \hY\r)^{-\top}M\l(I- \hY\r)^{-1} \approx M + M\l(\hY+\hY^2\r) + (\hY^{\top} + (\hY^{\top})^2)M +  \hY^{\top}M\hY.
\end{equation}
Assumption 1 implies $\mathbb{E}\left[\hY\right] = \mathbb{E}\left[\hY^{\top}\right] = 0$ as a simple
consequence of $\h{P}$ being an unbiased estimator. Therefore, after taking an expectation, the first
order terms of $\hY$ in \eqref{eq: neumann series} and \eqref{eq: inv(I-Y)'*inv(I-Y)} disappear.

We can further approximate the shifting factor $\v^*$ by expanding $\l(I- \hY\r)^{-1}$ in the
numerator and denominator of \eqref{eq: vstar in Y} up to the second order in $\hY$. When Assumption
1 holds and $\rho(\hY)<1$, the approximated optimal shifting factor $\v^{*}$ defined in
\eqref{eqn: eqn1 for operator shifting} has a second-order approximation
\begin{equation}\label{eq: vstar}
  \v^{*} \approx \v^{\star} \equiv \frac{
    \mathbb{E}_{\hat{P}}\left[
      b^{\top} ( M + \frac{M\hY^2+ (\hY^{\top})^{2}M}{2}) b
      \right]
  }{
    \mathbb{E}_{\hat{P}}\left[
      \tr\left(\left(\mathrm{cov}\left[\hat{b}\right] + b^{\top}b\right)
      ( M + \hY^{\top}M\hY + M\hY^{2} + (\hY^{\top})^2M)\right)\right]}.
\end{equation}
The derivation of \eqref{eq: vstar} is deferred to Section \ref{sec: proof of closed-form theorem}.

Under Assumption 1, this second-order approximation can be written in a form without explicit
expectation. This expectation-free form depends on the transition matrix $P$, the expected reward
$b$, and the reward covariance $\mathrm{cov}\left[\hat{b}\right]$. Let $\hp_{i}$ be random vectors
corresponding to the i-th row of $\h{P}$ and $p_i=\mathbb{E}\left[\hp_{i}\right]$, i.e.,
$\{\hp_i\}_{i=1}^{\lv \S \rv}$ and $\{p_i\}_{i=1}^{\lv \S \rv}$ are the row vectors of $\h{P}$ and
$P$, respectively:
\begin{equation}\label{eq: def of Xi pi}
  \hat{P} = \begin{bmatrix}
    \hp_{1}^{\top}\\
    \dots\\
    \hp_{\lv \S \rv}^{\top}
  \end{bmatrix},\quad
  P = \begin{bmatrix}
    p_{1}^{\top}\\
    \dots\\
    p_{\lv \S \rv}^{\top}
  \end{bmatrix}.
\end{equation}

\begin{theorem}\label{thm: structure of Y'Y and Y^2}
  The second-order approximation $\v^\star$ in \eqref{eq: vstar} admits the expectation-free form
  \begin{equation}\label{eq: vstar in G H}
    \v^{\star} = \theta(b,P) \equiv \frac{
      b^{\top} ( M + H/2) b
    }{
      b^{\top} ( M + G + H) b
      + \tr\left(\mathrm{cov}\left[\hat{b}\right]
      ( M + G + H)\right)},
  \end{equation}
  where
  \begin{equation}\label{eq: def of G and H}
    \begin{cases}
      B_{i} = \frac{1}{n}\l(\diag{p_{i}} - p_{i}p_{i}^{\top}\r);\\
      G = \mathbb{E}_{\hat{P}}\left[\hY^{\top}M\hY\right]=
      \g^2 A^{-\top}\left(\sum_{i=1}^{\lv \S \rv}\left[M\right]_{ii}B_{i}\right)A^{-1};\\
      \begin{aligned}
        H &= \mathbb{E}_{\hat{P}}\left[
        (\hY^{\top})^{2}M
        +
        M\hY^2
        \right] \\
        &= \g^2\left[\sum_{i = 1}^{\lv \S \rv}A^{-\top}B_{i}A^{-1}\mathrm{diag}\l(e_{i} \r)\right]M + \g^2M\left[\sum_{i = 1}^{\lv \S \rv}\mathrm{diag}\l(e_{i} \r)A^{-\top}B_{i}A^{-1}\right]
      \end{aligned}
      .
    \end{cases}
  \end{equation}
  Here $\mathrm{diag}(e_i)\in\R^{\lv \S \rv\times\lv \S \rv}$ is a matrix with elements $0$ except
  for $1$ on the $(i,i)$-th entry, $p_i$ are row vectors of $P$ as defined in \eqref{eq: def of
    Xi pi}, and the matrices $A$ and $\hY$ depend on $P$.
\end{theorem}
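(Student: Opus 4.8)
To prove Theorem~\ref{thm: structure of Y'Y and Y^2}, the plan is a direct second‑moment computation: substitute the explicit form of $\hY$ into the two matrices $G=\E_{\h{P}}[\hY^{\top}M\hY]$ and $H=\E_{\h{P}}[(\hY^{\top})^{2}M+M\hY^{2}]$ that appear in the second‑order approximation \eqref{eq: vstar}, use the row‑independence of $\h{P}$ granted by Assumption~1 to collapse the resulting expectations to row‑wise covariances, and recognize those covariances as the multinomial covariance $B_i$. Because $b$, $M$, and $A$ are deterministic while only $\h{P}$ (hence $\hY$) and $\h{b}$ are random, the outer expectations in the numerator and denominator of \eqref{eq: vstar} commute with the deterministic quadratic forms $b^{\top}(\cdot)b$ and with $\tr(\mathrm{cov}[\h{b}]\,(\cdot))$, so the expectation‑free formula \eqref{eq: vstar in G H} will follow immediately once $G$ and $H$ are in closed form.

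First I would set $W=P-\h{P}$, so that by \eqref{def of Z Y} and \eqref{def of A hatA} one has $\hY=\hZ A^{-1}=\g W A^{-1}$, $\hY^{2}=\g^{2}WA^{-1}WA^{-1}$, and $(\hY^{\top})^{2}=\g^{2}A^{-\top}W^{\top}A^{-\top}W^{\top}$. Under Assumption~1 the rows $W_i=p_i-\hp_i$ of $W$ are independent with mean zero, so for any fixed matrix $N$,
\[
\E[(W^{\top}NW)_{jk}]=\sum_{i,l}N_{il}\,\E[W_{ij}W_{lk}]=\sum_i N_{ii}\,\E[W_{ij}W_{ik}]=\sum_i N_{ii}\,(\mathrm{cov}[\hp_i])_{jk},
\]
since every cross‑row term with $i\neq l$ vanishes. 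The normalized‑multinomial assumption $\hp_i=\tfrac1n X_i$ with $X_i\sim\mathrm{Multinomial}(n,p_i)$ then gives $\mathrm{cov}[\hp_i]=\tfrac1n(\diag{p_i}-p_ip_i^{\top})=B_i$. Taking $N=M$ yields $\E[W^{\top}MW]=\sum_i[M]_{ii}B_i$, and since $\hY^{\top}M\hY=\g^{2}A^{-\top}W^{\top}MWA^{-1}$ this gives $G=\g^{2}A^{-\top}\big(\sum_i[M]_{ii}B_i\big)A^{-1}$, the stated formula.

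For $H$ I would run the same reduction on the two building blocks $\E[W^{\top}A^{-\top}W^{\top}]$ and $\E[WA^{-1}W]$. A componentwise computation, again keeping only the $i=l$ terms and using $\E[W_{ij}W_{ik}]=(B_i)_{jk}$, gives $\E[(W^{\top}A^{-\top}W^{\top})_{jk}]=(B_kA^{-1})_{jk}$ and $\E[(WA^{-1}W)_{jk}]=(A^{-\top}B_j)_{jk}$, i.e.\ in matrix form $\E[W^{\top}A^{-\top}W^{\top}]=\sum_i B_iA^{-1}\diag{e_i}$ and $\E[WA^{-1}W]=\sum_i\diag{e_i}A^{-\top}B_i$, where $\diag{e_i}$ is the selector matrix recording which row (resp.\ column) the surviving covariance $B_i$ sits on. Multiplying these back by the leftover factors, $\E[(\hY^{\top})^{2}M]=\g^{2}\big(\sum_i A^{-\top}B_iA^{-1}\diag{e_i}\big)M$ and $\E[M\hY^{2}]=\g^{2}M\big(\sum_i\diag{e_i}A^{-\top}B_iA^{-1}\big)$, whose sum is exactly the expression for $H$ in \eqref{eq: def of G and H}. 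Substituting the closed forms of $G$ and $H$ into \eqref{eq: vstar} and relabeling $\v^{\star}=\theta(b,P)$ then finishes the proof.

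I expect the main difficulty to be bookkeeping rather than conceptual: in $\E[WA^{-1}W]$ the row index that survives from $\mathrm{cov}[\hp_i]$ is sandwiched in the \emph{middle} of a triple product, so one must check carefully that it is faithfully encoded by an external $\diag{e_i}$ factor and not inadvertently folded into $A^{-\top}$; likewise one must keep the transpose identities $A^{-\top}=(A^{-1})^{\top}=(A^{\top})^{-1}$ aligned with the factor ordering that distinguishes $(\hY^{\top})^{2}$ from $\hY^{2}$. Everything else is a routine index contraction once the vanishing of the off‑row second moments is invoked.
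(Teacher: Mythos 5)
Your proposal is correct and follows essentially the same route as the paper's own proof: you exploit the row-independence and zero mean of $P-\h{P}$ so that only same-row second moments survive, identify those with the multinomial covariance $B_i$ (with $\mathrm{diag}(e_i)$ recording which row the surviving covariance occupies), and then pass the expectation through the deterministic quadratic forms in $b$ and $\mathrm{cov}[\hat b]$. The only difference is presentational — the paper packages the three moment identities as an auxiliary lemma under weaker assumptions and argues via block rows of $\hZ$, whereas you do the equivalent componentwise index contraction directly.
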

The proof of the above theorem is given in Section \ref{sec: proof of closed-form theorem}. 

\begin{remark}\label{rmk: diff_ns}
  Theorem \ref{thm: structure of Y'Y and Y^2} still holds under conditions weaker than Assumption
  1. Assuming the rows of $\h{P}$ and entries for \(\h{b}\) are independent unbiased estimators, then
  the second-order approximation in \eqref{eq: vstar} is still valid. Moreover, the expectation-free
  form in \eqref{eq: vstar in G H} holds when one replaces the definition of $B_{i}$ in \eqref{eq:
    def of G and H} by $B_{i} = \mathrm{cov}\left[\hp_{i}\right]$. This slightly more general
  statement is presented in Lemma \ref{lemma: auxiliary lemma for YMY and Y^2}, from which Theorem
  \ref{thm: structure of Y'Y and Y^2} is derived as a special case. In particular, if the state $i$
  receives $n_{i}$ samples, then \eqref{eq: vstar in G H} will still hold with $B_{i}$ in \eqref{eq:
    def of G and H} replaced by $\frac{1}{n_{i}}\l(\diag{p_{i}} - p_{i}p_{i}^{\top}\r)$.
\end{remark}

Theorem \ref{thm: structure of Y'Y and Y^2} also proves that the choice of $\v^{\star}$ is
asymptotically as powerful as $\v^*$ with $n\to\infty$. For $\mathrm{MSE}(\varepsilon)$ defined in \eqref{eqn:
  objective function for operator shifting}, the following estimation holds, with the proof
deferred to Section \ref{sec: proof of corollary on error asymptotics}.
\begin{lemma}\label{lemma: structure of objective function}
The MSE in \eqref{eqn: objective function for operator shifting} can be approximated by
  \begin{equation}\label{eq: objective under Taylor second-order}
    \mathrm{MSE}(\varepsilon) =  \left(1 - \varepsilon\right)^2 \ll b \rl_{M}^{2} + ( g + h + t)\varepsilon^2 - h\varepsilon + O\l(n^{-\frac{3}{2}}\r),
  \end{equation}
  where $g = b^{\top}Gb, h = b^{\top}Hb, t = \tr\left(\mathrm{cov}\left[\hat{b}\right] \l( M+G+H\r)\right)$
  with all symbols defined in Theorem \ref{thm: structure of Y'Y and Y^2}.
  In addition,
  \begin{equation}\label{eq: v subtracts vstar}
    \v^* - \v^{\star} = O\l(n^{-\frac{3}{2}}\r). 
  \end{equation}
\end{lemma}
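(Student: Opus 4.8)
The plan is to treat $\mathrm{MSE}(\v)$ as an explicit quadratic in $\v$ and then expand it to second order in $\hY$. Writing $\t{v}_\v=\v\h{v}$ with $\h{v}=\h{A}^{-1}\h{b}$ and using $Av=b$,
\[
  \mathrm{MSE}(\v)=\E_{\h{P},\h{b}}\ll v-\v\h{v}\rl_M^2 = b^\top M b - 2\v\,\E_{\h{P},\h{b}}\!\left[b^\top M A\h{v}\right] + \v^2\,\E_{\h{P},\h{b}}\!\left[\h{v}^\top A^\top M A\h{v}\right].
\]
Next I would use the identity $\h{A}=(I-\hY)A$, equivalently $A\h{v}=(I-\hY)^{-1}\h{b}$, together with the independence of $\h{P}$ and $\h{b}$, $\E[\h{b}]=b$, and $\E[\h{b}\h{b}^\top]=bb^\top+\mathrm{cov}[\h{b}]$, to rewrite the two nonconstant coefficients as $N$ and $D$, where
\[
  N:=\E_{\h{P}}\!\left[b^\top M(I-\hY)^{-1}b\right],\quad D:=\E_{\h{P}}\!\left[b^\top(I-\hY)^{-\top}M(I-\hY)^{-1}b\right]+\E_{\h{P}}\!\left[\tr\!\left(\mathrm{cov}[\h{b}](I-\hY)^{-\top}M(I-\hY)^{-1}\right)\right],
\]
so that $\mathrm{MSE}(\v)=b^\top M b-2\v N+\v^2 D$ and, by quadratic minimization, $\v^*=N/D$ (the ratio of Lemma \ref{lemma: vstar in Y}).

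Then I would insert the Neumann expansions \eqref{eq: neumann series} and \eqref{eq: inv(I-Y)'*inv(I-Y)}, take expectations, and use $\E[\hY]=\E[\hY^\top]=0$ (Assumption 1) to annihilate the first-order terms; since $b^\top M\hY^2 b=b^\top(\hY^\top)^2 M b$ as scalars, this yields $N=b^\top M b+h/2+O(n^{-3/2})$ and $D=b^\top M b+g+h+t+O(n^{-3/2})$, where $G=\E[\hY^\top M\hY]$ and $H=\E[M\hY^2+(\hY^\top)^2 M]$ are exactly the matrices of Theorem \ref{thm: structure of Y'Y and Y^2}, and the covariance cross-terms containing $\hY$ are $O(n^{-2})$ because $\|\mathrm{cov}[\h{b}]\|=O(1/n)$. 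Substituting back into $\mathrm{MSE}(\v)=b^\top M b-2\v N+\v^2 D$ and collecting powers of $\v$,
\[
  \mathrm{MSE}(\v)=b^\top M b\,(1-2\v+\v^2)-h\v+(g+h+t)\v^2+O(n^{-3/2})=(1-\v)^2\ll b\rl_M^2+(g+h+t)\v^2-h\v+O(n^{-3/2}),
\]
which is \eqref{eq: objective under Taylor second-order} (the constant $b^\top M b$ being the quantity written $\ll b\rl_M^2$ there).

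For \eqref{eq: v subtracts vstar}, I would observe that $\v^{\star}$ in \eqref{eq: vstar} is exactly $N_2/D_2$ with $N_2=b^\top M b+h/2$ and $D_2=b^\top M b+g+h+t$ the second-order truncations, and that the expansion above gives $N=N_2+O(n^{-3/2})$, $D=D_2+O(n^{-3/2})$. Since $b\neq0$ and $M\succ0$, for $n$ large enough $D$ and $D_2$ are bounded away from $0$, so
\[
  \v^*-\v^{\star}=\frac{N}{D}-\frac{N_2}{D_2}=\frac{(N-N_2)D_2-N_2(D-D_2)}{D D_2}=O(n^{-3/2}).
\]

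The main obstacle is not this algebra but making the $O(n^{-3/2})$ remainders rigorous \emph{in expectation}. Two points need care: (i) the Neumann series converges only on the event $\{\rho(\hY)<1\}$, so one must bound the contribution of its complement, which is exponentially small by the concentration estimates of Appendix \ref{appendix: condition for convergence of Neumann Series}; and (ii) on that event one needs a uniform third-moment bound $\E\ll\hY\rl^3=O(n^{-3/2})$ for the truncation error, which follows from Assumption 1 and standard moment bounds for the normalized multinomial rows of $\h{P}$. One also verifies $\|\mathrm{cov}[\h{b}]\|=O(1/n)$ so that reward-covariance contributions beyond $\tr(\mathrm{cov}[\h{b}]M)$ are genuinely higher order.
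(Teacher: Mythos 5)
Your proposal is correct and follows essentially the same route as the paper: write $\mathrm{MSE}(\v)$ as a quadratic in $\v$ whose coefficients are the numerator and denominator of Lemma \ref{lemma: vstar in Y}, expand via the Neumann series using $\E[\hY]=0$ to identify them with $\ll b \rl_M^2 + h/2$ and $\ll b \rl_M^2 + g+h+t$ up to $O(n^{-3/2})$, and deduce \eqref{eq: v subtracts vstar} by comparing the two ratios with $g,h,t \propto 1/n$. The remainder control you flag as the main obstacle is handled in the paper by noting that each row of $\hZ$ is an average of $n$ centered samples, so its third- and higher-order moments are $O(n^{-3/2})$ by the Marcinkiewicz--Zygmund inequality, which is the same moment bound you identify.
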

\tcb{
The relative error reduction factor
$\eta\equiv\frac{\mathrm{MSE}(1)-\mathrm{MSE}(\v^{\star})}{\mathrm{MSE}(1)}$, with MSE representing
the mean square error defined in \eqref{eqn: objective function for operator shifting}, is a
useful measure for improvements.  Below is a corollary of Lemma \ref{lemma: structure of objective
  function} regarding $\eta$.
\begin{corollary}\label{corollary: structure of error reduction}
Define relative error reduction factor as
$\eta\equiv\frac{\mathrm{MSE}(1)-\mathrm{MSE}(\v^{\star})}{\mathrm{MSE}(1)}$. For \(n\) sufficiently large, \(\eta\) is positive, and decays as follows
  \begin{equation}
    \eta = O\l(1/n \r),
  \end{equation}
  where MSE is defined in \eqref{eqn: objective function for operator shifting}.
\end{corollary}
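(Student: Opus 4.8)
The plan is to lean on the fact that $\mathrm{MSE}(\v)$ is an \emph{exact} quadratic in $\v$. Writing $w := A\hA^{-1}\h{b}$ and using $Av = b$,
\[
  \mathrm{MSE}(\v) = \E\l[(b-\v w)^{\top}M(b-\v w)\r] = c_0 - 2\v c_1 + \v^{2}c_2,\qquad c_0 = b^{\top}Mb,\ \ c_1 = b^{\top}M\,\E[w],\ \ c_2 = \E[w^{\top}Mw],
\]
and the exact optimum of this parabola is $\v^{*} = c_1/c_2$, which is \eqref{eqn: eqn1 for operator augmentation}. Hence $\mathrm{MSE}(1) - \mathrm{MSE}(\v^{*}) = (c_2-c_1)^{2}/c_2$, and since the parabola has curvature $2c_2$ with vertex $\v^{*}$, the surrogate obeys $\mathrm{MSE}(\v^{\star}) = \mathrm{MSE}(\v^{*}) + c_2(\v^{\star}-\v^{*})^{2}$, which by $\v^{\star}-\v^{*} = O(n^{-3/2})$ from \eqref{eq: v subtracts vstar} and $c_2 = O(1)$ gives
\[
  \eta = \frac{\mathrm{MSE}(1) - \mathrm{MSE}(\v^{\star})}{\mathrm{MSE}(1)} = \frac{(c_2-c_1)^{2}/c_2 + O(n^{-3})}{\mathrm{MSE}(1)}.
\]
Thus the corollary reduces to establishing $c_2 - c_1 = O(1/n)$, $c_2 = \Theta(1)$, and $\mathrm{MSE}(1) = \Theta(1/n)$.

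All three follow from Lemma \ref{lemma: structure of objective function}. Since $\mathrm{MSE}(\v)$ is exactly the quadratic above and \eqref{eq: objective under Taylor second-order} is a quadratic in $\v$ plus an $O(n^{-3/2})$ slack, the two quadratics agree coefficient by coefficient up to $O(n^{-3/2})$, i.e. $c_0 = b^{\top}Mb$, $c_1 = b^{\top}Mb + h/2 + O(n^{-3/2})$ and $c_2 = b^{\top}Mb + g + h + t + O(n^{-3/2})$, where $g = b^{\top}Gb$, $h = b^{\top}Hb$, $t = \tr(\mathrm{cov}[\h{b}](M+G+H))$ are all $O(1/n)$ because $G,H$ and $\mathrm{cov}[\h{b}]$ each carry a $1/n$ factor (the matrices $B_i = \tfrac1n(\diag{p_i} - p_ip_i^{\top})$ in Theorem \ref{thm: structure of Y'Y and Y^2}, and $\mathrm{cov}[\h{b}]$ being the covariance of an $n$-sample average). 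Consequently $c_2 = b^{\top}Mb + O(1/n) = \Theta(1)$ for $n$ large (as $M \succ 0$, $b \neq 0$), $c_2 - c_1 = g + h/2 + t + O(n^{-3/2}) = O(1/n)$, and $\mathrm{MSE}(1) = c_0 - 2c_1 + c_2 = g + t + O(n^{-3/2})$, which is $\asymp 1/n$ in the non-degenerate regime where the sampling is genuinely stochastic (if $g+t = o(1/n)$ then even the naive estimate $\h{v}$ has error $o(1/n)$ and there is nothing to improve). Feeding these into the expression for $\eta$ gives $\eta = O(n^{-2})/\Theta(n^{-1}) + O(n^{-1}) = O(1/n)$.

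The step I expect to be the main obstacle is controlling the numerator: $\mathrm{MSE}(1) - \mathrm{MSE}(\v^{\star})$ is of order $n^{-2}$, which is \emph{smaller} than the $O(n^{-3/2})$ remainder attached to each individual value $\mathrm{MSE}(1)$ and $\mathrm{MSE}(\v^{\star})$, so the argument only goes through because those remainders essentially cancel in the difference. Two facts make the cancellation legitimate: $\mathrm{MSE}(\v)$ is an honest quadratic in $\v$ (so the remainder in \eqref{eq: objective under Taylor second-order} is itself a low-degree polynomial in $\v$ and loses an extra $O(1/n)$ when differenced between the nearby points $1$ and $\v^{\star} = 1 + O(1/n)$), and the cancellation upgrading $c_2 - c_1$ from the crude $O(n^{-1/2})$ to $O(1/n)$ is already built into Lemma \ref{lemma: structure of objective function}, whose right-hand side carries no first-order-in-$\hY$ term precisely because $\E[\hY] = \E[\hY^{\top}] = 0$ under the unbiasedness $\E[\h{P}] = P$ of Assumption 1. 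A shortcut that makes everything transparent is to first sharpen the remainder in Lemma \ref{lemma: structure of objective function} from $O(n^{-3/2})$ to $O(n^{-2})$ — legitimate because the third central moments of a multinomial are $O(n^{-2})$, so the cubic-in-$\hY$ terms dropped in its derivation contribute only at that order — after which $\mathrm{MSE}(1) - \mathrm{MSE}(\v^{\star}) = \tfrac{(2g+h+2t)^{2}}{4(b^{\top}Mb+g+h+t)} + O(n^{-2}) = O(n^{-2})$ by direct substitution of $g,h,t = O(1/n)$, and $\eta = O(1/n)$ is immediate.
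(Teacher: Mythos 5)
Your argument is correct, and it rests on the same two pillars as the paper's proof (Lemma \ref{lemma: structure of objective function} together with $g,h,t\propto 1/n$), but the bookkeeping is organized differently. The paper substitutes $\varepsilon=1$ and $\varepsilon=\varepsilon^{\star}$ directly into the surrogate quadratic \eqref{eq: objective under Taylor second-order}, writes $\mathrm{MSE}(1)-\mathrm{MSE}(\varepsilon^{\star})=\frac{(g+h/2+t)^{2}}{1+g+h+t}+o\big(\frac{1-\varepsilon^{\star}}{n}\big)$, bounds $1-\varepsilon^{\star}=O(1/n)$, and divides by $\mathrm{MSE}(1)=g+t+o(1/n)$. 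You instead use that $\mathrm{MSE}$ is an exact quadratic with coefficients $c_0,c_1,c_2$ and minimizer $\varepsilon^{*}=c_1/c_2$ as in \eqref{eqn: eqn1 for operator augmentation}, invoke the exact vertex identity $\mathrm{MSE}(1)-\mathrm{MSE}(\varepsilon^{*})=(c_2-c_1)^{2}/c_2$, and transfer from $\varepsilon^{*}$ to $\varepsilon^{\star}$ through the curvature and \eqref{eq: v subtracts vstar}; identifying $c_1,c_2$ with the Lemma's coefficients recovers exactly the paper's leading expression $\frac{(g+h/2+t)^{2}}{1+g+h+t}$. What your route buys is an explicit justification of the cancellation that the paper treats implicitly via its unexplained nuisance term $o\big(\frac{1-\varepsilon^{\star}}{n}\big)$: since each of $\mathrm{MSE}(1)$ and $\mathrm{MSE}(\varepsilon^{\star})$ individually carries an $O(n^{-3/2})$ remainder larger than their $O(n^{-2})$ difference, some device is needed to difference them safely, and your exact-quadratic plus \eqref{eq: v subtracts vstar} argument supplies it cleanly (your coefficient-matching step is legitimate because the remainder in \eqref{eq: objective under Taylor second-order} is itself polynomial in $\varepsilon$). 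Two minor caveats: the closing ``shortcut'' claiming the remainder in Lemma \ref{lemma: structure of objective function} can be sharpened to $O(n^{-2})$ purely from third central moments is unnecessary for your main argument and not fully justified as stated, since the Neumann tail is controlled through absolute moments, which are only $\Theta(n^{-3/2})$ at third order; and your final division, like the paper's, implicitly assumes the non-degenerate regime $\mathrm{MSE}(1)\asymp 1/n$, a restriction you at least state explicitly while the paper leaves it tacit.
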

The proof is given in Section \ref{sec: proof of corollary on error asymptotics}. The numerical
results also verify the relationship in the above Corollary.
}

\begin{bluetext}

\subsection{Lower and upper bounds for $\v^\star$.}\label{sec: lower bound}
In this section, we aim to provide bounds to show that $\varepsilon^{\star}$ will
approximately fall in the $(0,1)$ range.  Throughout this subsection, we conduct the analysis in the
residual norm, which is the case with $M = I$. We present an upper bound and a lower bound for
$\v^\star$ in Theorem \ref{thm: upper bd}. The relevant parameters are $\lv\S \rv$, $n$ and $\g$, which are the number of state, the number of samples per state and
the discount factor, respectively.



\begin{theorem}\label{thm: upper bd}
  Let $p_M = \max P_{i,j}$ and $b_M = \max_i |b_i|$. If
  $\frac{p_M}{n}\frac{\g^2}{\l(1-\g\r)^2}\l(\frac{\l(1-\g\r)}{\g} + \frac{\sqrt{\lv \S \rv} b_M}{\ll b \rl_2}\r)^2 \leq \frac{1}{2}$,
  then $\v^\star$ is bounded by
  \[
  0 < \v^\star \leq 1 + \frac{p_M}{n} \frac{\g^2}{\l(1-\g\r)^2} \l(\l(1-\g\r) +\g\frac{\sqrt{\lv \S \rv} b_M}{\ll b \rl_2}\r)^2.
  \]

\end{theorem}
The bound comes from technique using the spectral
structure of the covariance matrix of a multinomial distribution and a tight bound for $(I-\g P)^{-1}b$. We defer the proof to Section \ref{sec:proofs}.

We now discuss the implication of Theorem \ref{thm: upper bd}. The reward vector $b$ is {\em spread} if
$\frac{\max_i |b_i|}{\ll b \rl_2} \sim O\l(\lv\S \rv^{-1/2}\r)$. Similarly, the transition matrix
$P$ is {\em spread} if $\max P_{ij}\sim o\l(\lv \S \rv^{-1/2}\r)$. If both $b$ and \(P\) are spread, it follows that \(\epsilon^{\star} - 1\) is upper bounded by a term in \(O(n^{-1} \lv \S \rv^{-1/2})\).

\end{bluetext}

\section{Practical algorithm}\label{sec: numerical examples}


\subsection{Algorithm.}\label{sec: algo}

In practice, we do not have direct access to $P$ or $b$. Therefore, the second-order estimate
$\v^\star$ derived in \eqref{eq: vstar in G H} is an oracle estimator. One can address this issue by
bootstrapping the distribution of $\h{P}$. More specifically, let \(P\) be a transition matrix and
denote $\mathcal{M}_{n}(P)$ as the normalized multinomial distribution that the estimated transition
matrix \(\hat{P}\) follows according to Assumption 1. Since one only has access to a single
observation $\hat{P}$, $\mathcal{M}_{n}(P)$ is approximated by $\mathcal{M}_{n}(\h{P})$ in the
numerical implementation.  In the usual bootstrapping procedure, one needs to simulate
i.i.d. samples $\{\Tilde{P}_{(j)} \}_{j=1}^{l} \sim \mathcal{M}_{n}(\h{P})$. By setting
$\Tilde{Y}_{(j)}=\gamma(\Tilde{P}_{(j)}-\hat{P})\hat{A}^{-1}$ and following the form in Theorem \ref{thm: structure of Y'Y and Y^2}, one can
approximate $\v^\star$ in \eqref{eq: vstar} by replacing the expectation with an empirical mean:
\begin{equation}\label{eq: bootstrap vstar}
  \v^{\star} \approx  \frac{
    \h{b}^{\top} ( M + \Tilde{H}/2) \h{b}
  }{
    \h{b}^{\top} \l( M + \Tilde{G} + \Tilde{H}\r) \h{b}
    + \tr\left(\t{\Sigma}
    \l( M + \Tilde{G} + \Tilde{H}\r)\right)},
\end{equation}
with
\begin{equation}
  \begin{cases}
    \Tilde{G} = \frac{1}{l}\sum_{j=1}^{l}
    \Tilde{Y}_{(j)}^{\top} M \Tilde{Y}_{(j)} \approx \mathbb{E}_{\h{P}}\left[\hY^{\top}M\hY\right] = G ;\\
    \Tilde{H} = \frac{1}{l}\sum_{j=1}^{l}
    M\Tilde{Y}_{(j)}^2 + \left(\Tilde{Y}_{(j)}^{\top}\right)^2 M
    \approx \mathbb{E}_{\h{P}}\left[M \hY^2+ (\hY^{\top})^{2}M\right] = H;\\
    \t{\Sigma} \approx \mathrm{cov}\left[\hat{b}\right].
  \end{cases}
\end{equation}
However, there is a major drawback to this scheme. In addition to the error caused by the difference between $\mathcal{M}_{n}(\h{P})$ and $\mathcal{M}_{n}({P})$, the scheme introduces additional errors  due to the empirical mean in
place of the expectation. The empirical mean errors $\Tilde{G} - G$ and $\Tilde{H} - H$
are of order $O\l(l^{-1/2}\r)$. In addition, the procedure in \eqref{eq: bootstrap vstar} has a computational cost of order
$O\l(l\lv \S \rv^3\r)$. 


\paragraph{Plug-in estimate.}
Luckily in our case, Assumption 1 (i.e., $\h{P}$ follows the normalized multinomial distribution)
allows for a direct formula for $\v^{\star}$, which automatically removes the error in the empirical
mean.  We can simply set
\begin{equation}\label{eq: definition of bootsrapped vstar}
  \Tilde{\v}^{\star} := \theta(\h{b},\h{P}),
\end{equation}
where $\theta(b,P)$ is defined in Theorem \ref{thm: structure of Y'Y and Y^2}. The complete
numerical algorithm is presented in Algorithm \ref{alg:operator shifting with multinomial
  estimation MDP}. The right-hand side of \eqref{eq: bootstrap vstar} converges to
$\theta(\h{b},\h{P})$ as $l \rightarrow \infty$. In addition, the computational cost is reduced from
$O\l(l\lv \S \rv^3 \r)$ to $O\l(\lv \S \rv^3 \r)$. This complete removal of empirical mean error is
what sets the multinomial MDP case apart from general operator shifting.  Moreover, since both
$\v^{\star}$ in \eqref{eq: vstar in G H} and $\Tilde{\v}^{\star}$ in \eqref{eq: definition of bootsrapped vstar} share the same functional form, the lower and upper bounds in Section \ref{sec: lower bound} automatically
apply to both $\v^{\star}$ and $\Tilde{\v}^{\star}$. In all the following numerical examples, we use the approximated factor \(\t\v^{\star}\), which does not rely on oracle access.


\begin{algorithm}[ht!]
  \caption{Operator Shifting for estimating MDP (Multinomial)
    \label{alg:operator shifting with multinomial estimation MDP}}
    Outputs the bootstrapped $\Tilde{\v}^{\star} = \theta(\h{b},\h{P})$. The function can also output the true value by $\v^{\star} = \theta(b,P)$ if one has oracle access to $P,b$.
  \begin{algorithmic}[1]
    \Require{$\h{P}$: Estimated transition matrix}
    \Require{$\h{b}$: Estimated expected reward}
    \Require{$n$: Sample data size per state}
    \Require{$\Tilde{\Sigma}$: Estimated covariance matrix of $\hat{b}$}
    \Require{$\g$: Discount factor}
    \Require{$M$: The chosen norm matrix}

    \Statex
    \Function{$\theta$}{$\h{P}, \h{b}$}
      \Let{$\hC,\hD$}{$\mathrm{zeros}(\lv \S \rv)$} \Comment{$\mathrm{zeros}(n)$: A zero matrix of size $n$}
      \For{$i \gets 1 \textrm{ to } \lv \S \rv$}
        \Let{$\hp_{i}$}{$ \hat{P}^{\top}e_{i} $} \Comment{Get the $i$-th row of $\h{P}$}
        \Let{$\hB_{i}$}{$\frac{1}{n}\left(\hp_{i}\hp_{i}^{\top} - \mathrm{diag}(\hp_{i}) \right)$} \Comment{Estimate covariance}
        \Let{$\hC$}{$\hC + \hB_{i}\hA^{-1}\mathrm{diag}\l(e_{i} \r)$}
        \Let{$\hD$}{$\hD + [M]_{ii}\hB_{i}$}
      \EndFor
      \Let{$\hC$}{$\g^2\hA^{-\top}\hC$} \Comment{Approximate $\mathbb{E}\left[(\hY^{\top})^{2}\right]$}
      \Let{$\hG$}{$\g^2\hA^{-\top}\hD\hA^{-1}$} \Comment{Approximate $\mathbb{E}\left[\hY^{\top}M\hY\right]$}
      \Let{$\hH$}{$\hC M + M \hC^{\top}$} \Comment{Approximate $\mathbb{E}\left[(\hY^{\top})^{2}M + M\hY^2\right]$}
      \Let{$\Tilde{\v}^{\star}$}{$
        \frac{
          \h{b}^{\top} \l( M + \hH/2\r) \h{b}
        }{
          \h{b}^{\top} \l( M + \hG + \hH\r) \h{b}
          + \tr\left(\Tilde{\Sigma}
          \l( M + \hG + \hH\r)\right)}
        $
      }
      \State \Return{$\Tilde{\v}^{\star}$}
    \EndFunction
  \end{algorithmic}
\end{algorithm}



\subsection{Numerical examples.}

\paragraph{Policy evaluation of an MDP over a circle.}

We first consider an MDP over a discrete state space $\mathbb{S} = \{k\}_{k=0}^{N-1}$
with $N = 64$ and $\g = 0.9$. The transition dynamics are given as below,
\[
\begin{aligned}
  &s_{t+1} \gets s_t + \left(1 + Z_{t} \right)a_{t} \mod{N},\\ 
  &r_{s_{t},a_{t}} = {\sin\l(\frac{2\pi s_{t}}{N}\r)} + a_{t}{\cos\l(\frac{2\pi s_{t}}{N}\r)}/10+ X_{t},
\end{aligned}
\]
where $a_{t} \in \{\pm 1\}$ is drawn from a policy $\pi(a_{t}|s_{t}) = \frac{1}{2} + \frac{1}{5}
a_{t}\sin\l(\frac{2\pi s_{t}}{N}\r)$, and $X_{t} \sim N(0,\delta)$ with
$\delta\in\{0,0.1,0.2\}$. When $\delta = 0$, the reward is deterministic. Here $Z_{t}$ is a random
integer taking values in the set $\{-\sigma,\ldots,\sigma\}$ with equal probability, where
$\sigma\in\{1,2,4\}$. A larger $\sigma$ means each state could transit to more neighboring states
under one step. Figure \ref{fig:1D circle MDP} illustrates the distribution of the estimated \(v(x_{s})\) for \(x_{s} := 0\), see caption for implementation detail. One can see that the shifted value is more concentrated around the ground truth value than the naive solution.

\begin{figure}[ht!]
  \centering
  \includegraphics[width =0.6\linewidth]{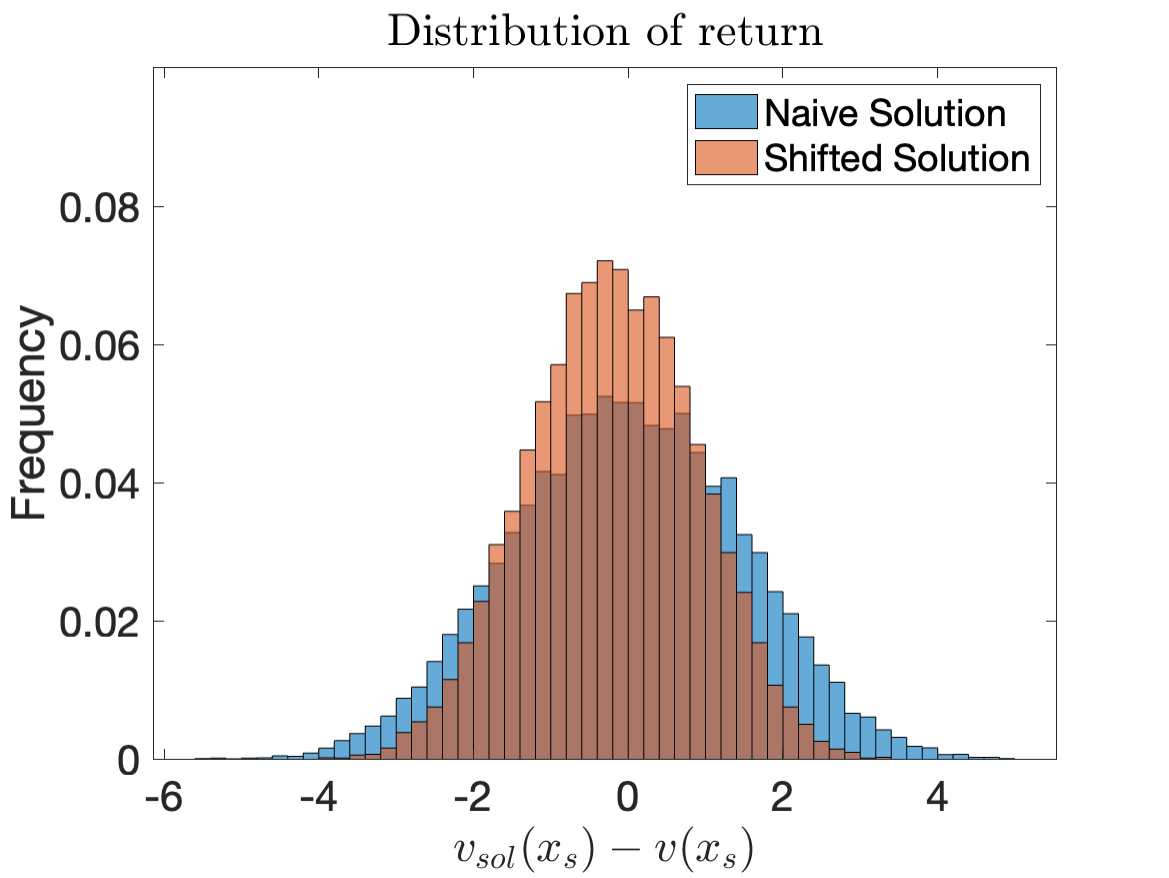}
  \caption{1D circle example. Histogram of \(v_{sol}(x_{s}) - v(x_{s})\), where \(v(x_{s})\) denotes the true return at \(x_{s}\), and \(v_{sol}(x_{s})\) denotes the estimated return from either the naive solution or the operator shifting solution. The shifted solution is more concentrated around the ground truth. The histogram is obtained via \(20,000\) simulations, and we set $n = 8$, $\delta = 0.2$, and $\sigma =
    4$.}
  \label{fig:1D circle MDP}
\end{figure}

\begin{figure}[ht!]
  \centering \subfloat[Residual
    norm]{\includegraphics[width=0.5\linewidth]{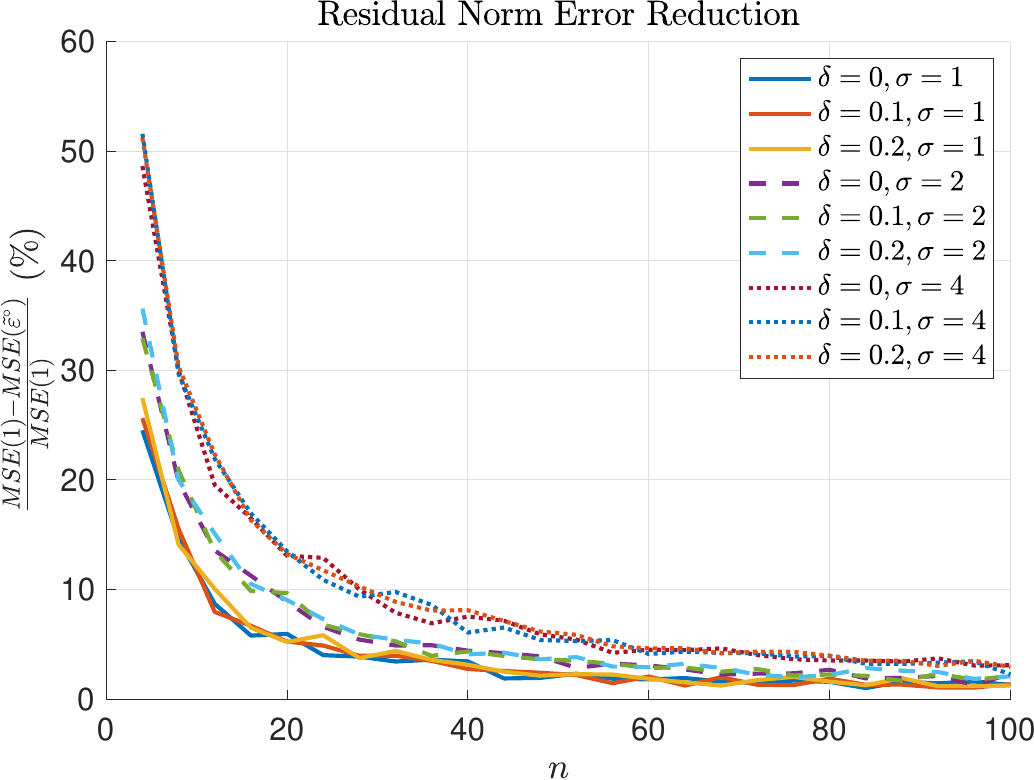}}
  \subfloat[$l_{2}$
    norm]{\includegraphics[width=0.5\linewidth]{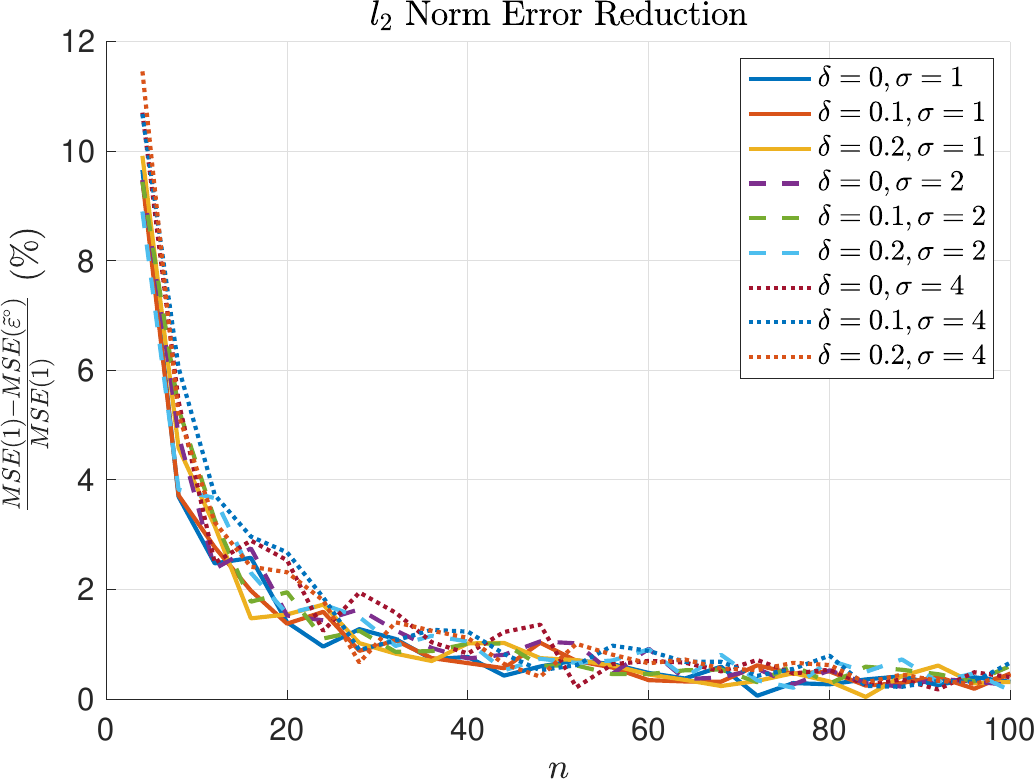}}
  \caption{1D circle example. Error reduction as a function of $n$ for the residual norm (left) and
    the $l_{2}$ norm (right). The $y$-axis is the error reduction rate in MSE, relative to the error
    of the naive solution. The error reduction rate is inversely proportional to $n$ across
    different choices of $\sigma$ and $\delta$. For the residual norm case, the error reduction is
    heavily influenced by the choice of parameters, where a larger $\sigma$ or $\delta$ implies a
    larger reduction in error. }
  \label{fig:1D circle MDP Error trend}
\end{figure}

One can also bootstrap to reduce the $l_{2}$ error. Specifically, the $l_{2}$ error refers to the
case when \(M = A^{-\top}A^{-1}\) in the norm $\ll \cdot \rl_{M}$ defined in \eqref{eq: def of
  norm}. Despite a lack of access to \(A\), one can use \(M = \h{A}^{-\top}\h{A}^{-1}\) for
\(l_{2}\) error minimization, which works well empirically. The error reduction trend remains the
same (see Figure \ref{fig:1D circle MDP Error trend} for details).  Overall, the error reduction for
$l_{2}$ norm is less significant than the residual norm, though it is still significant for small
$n$.

In the following examples, we focus on
the results for the residual norm.

\paragraph{MDPs generated by random graphs.}
To test the robustness of Algorithm \ref{alg:operator shifting with multinomial estimation MDP},
here we apply the operator shifting method to different underlying transition matrices. For
consistency, we set $\lv \S \rv = 64$.

As discussed in the 1D circle case, the randomness in $\hat{b}$ usually boosts the performance of
the operator shifting method. Here we take out the randomness in the reward and instead let
$\hat{b}$ be deterministic, i.e., $\hat{b} = b$ and $\mathrm{cov}[\hat{b}] = 0$. To test different
$b$, we assume that $b$ is randomly generated according to $\mathcal{N}(0,I)$. The transition matrix
$P$ corresponds to the random walk on a directed random graph $G =(V,E,w)$, where \(V = \S\) is the
vertex set, \(E\) is the edge set, and the edge weight is $w:E\rightarrow \mathbb{R}_{\geq 0}$.

Two types of random graphs are considered. In the first {\em dense} case, the graph $G$ is
considered to be fully connected, and the weight $w(e)$ on each edge $e$ is an i.i.d. random variable
following $w(e) \sim \mathcal{U}(0,1)$. In the second {\em sparse} case, a sparse
graph is considered. In order to generate a random sparse graph, one initializes with a graph containing an empty edge set,
\[G \gets G_{0} := (V = \S,E = \varnothing).\] 
For each vertex $v \in V$, two vertices $v_{1},v_{2}$ are randomly selected from the set
\(\S\setminus \{v\}\) that excludes $v$ itself with equal probability, and then \[E \gets E
\cup\{(v_{1},v), (v,v_{2})\}.\] After enumerating over all vertices, one then assigns a weight of
one to all existing edges in \(G\). This construction ensures that none of the vertices is a well or
sink node, {that is, each vertex has at least one indegree and one outdegree}, but the transition
matrix is still quite sparse.

\begin{figure}[ht!]
    \centering
    \subfloat[Random dense MDP]{\includegraphics[width=0.5\linewidth]{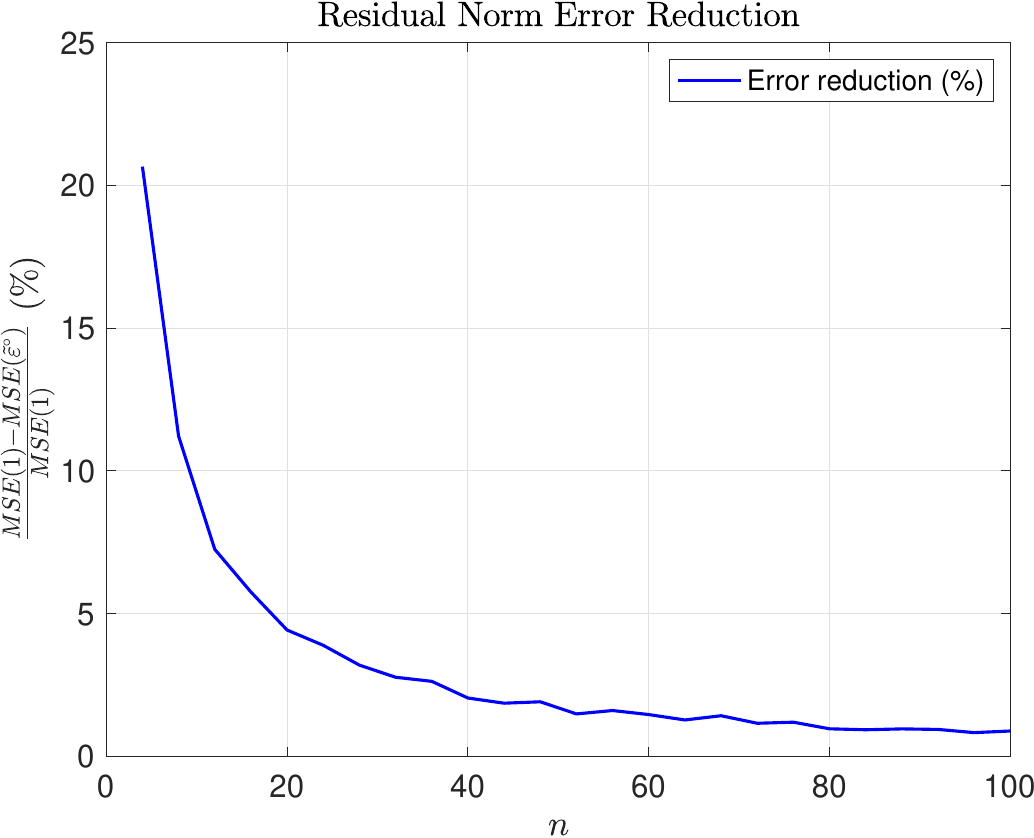}}
    \subfloat[Random sparse MDP]{\includegraphics[width=0.5\linewidth]{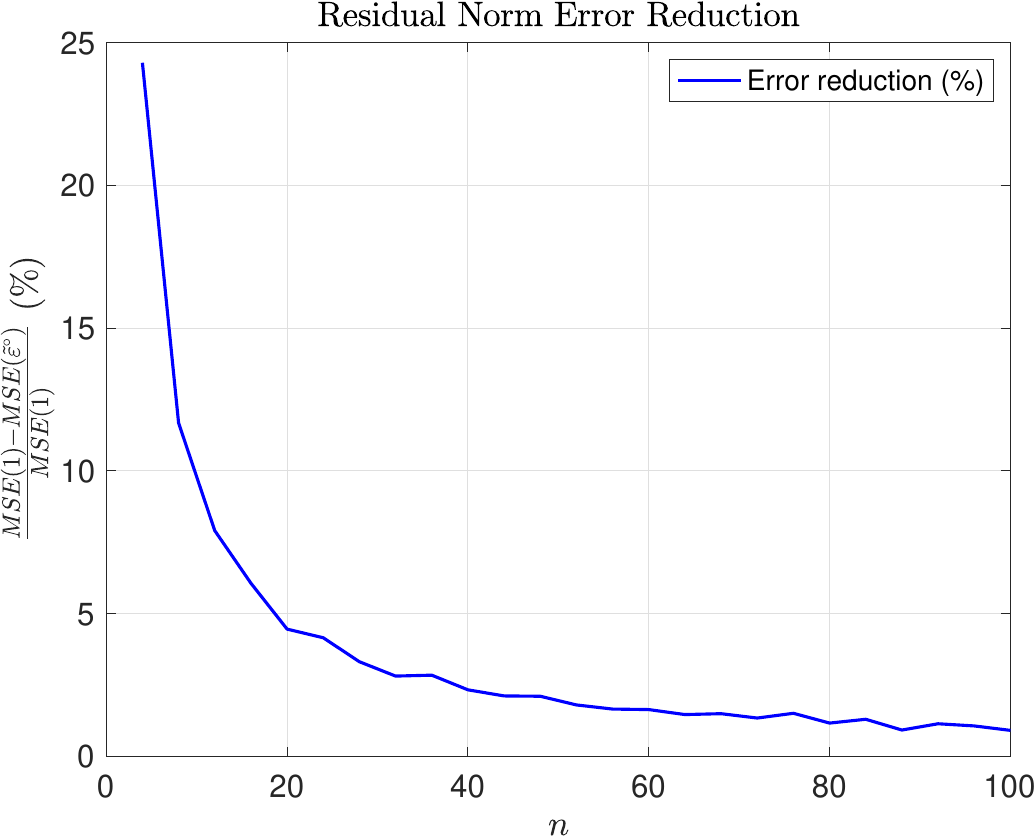}}
    \caption{Random directed graphs. Error reduction as a function of sample size $n$. Left: the random dense
      graph. Right: the random sparse graph.}
    \label{fig:Dense Random MDP Error trend}
\end{figure}
Figure \ref{fig:Dense Random MDP Error trend} shows that the same MSE reduction pattern holds in the
random directed graph cases. The operator shifting solution still consistently outperforms the
naive solution.

\paragraph{Policy evaluation of an MDP over a torus.}

We now consider an MDP with a discrete state space $\mathbb{S} = \{s_{ij} = (i,j)\}_{i,j=0}^{N-1}$
with $N = 8$ and $\g = 0.9$. Note that the size of the state space $|\S|$ is still $64$. Let
$(s)_{k}$ stand for the first or second entry of the vector $s$ with $k = 1$ or $2$. The transition
dynamics and reward are given by 
\[
\begin{aligned}
    &s_{t+1} \gets s_t + (1+ Z_t )a_t \mod N, \\
    &r_{s_{t},a_{t}} \gets 2 + \sin\l(\frac{2\pi (s_{t})_{1}}{N} \r) + \cos\l(\frac{2\pi (s_{t})_{2}}{N}\r) + X_{t},
\end{aligned}
\]
where $ a_t\in\A = \{(\pm 1,0), (0,\pm1)\}$,  $X_{t} \sim N(0,\delta)$ with $\delta \in \{0,0.1,0.2\}$. {Here $Z_{t}$ is a
random integer taking values in the set $\{-\sigma,\ldots,\sigma\}$ with equal probability, where $\sigma \in \{1,2,4\}$. } We use the policy 
\begin{equation}
  \pi(a_{t} = (a_{1},a_{2})|s_{t}) = \frac{1}{4}+ \frac{1}{20} \left( a_{1}\cos\l(\frac{2\pi (s_{t})_{1}}{N} \r) +
  a_{2}\sin\l(\frac{2\pi (s_{t})_{2}}{N} \r)\right).
\end{equation}

Figure \ref{fig:2D Torus Error trend} summarizes the performance and exhibits a similar error
reduction trend. Contrary to the role of the parameters in the 1D circle case, different choices of
$\sigma$ and $\delta$ do not change the performance of the operator shifting method.


\begin{figure}[ht!]
    \centering {\includegraphics[width =0.5\linewidth]{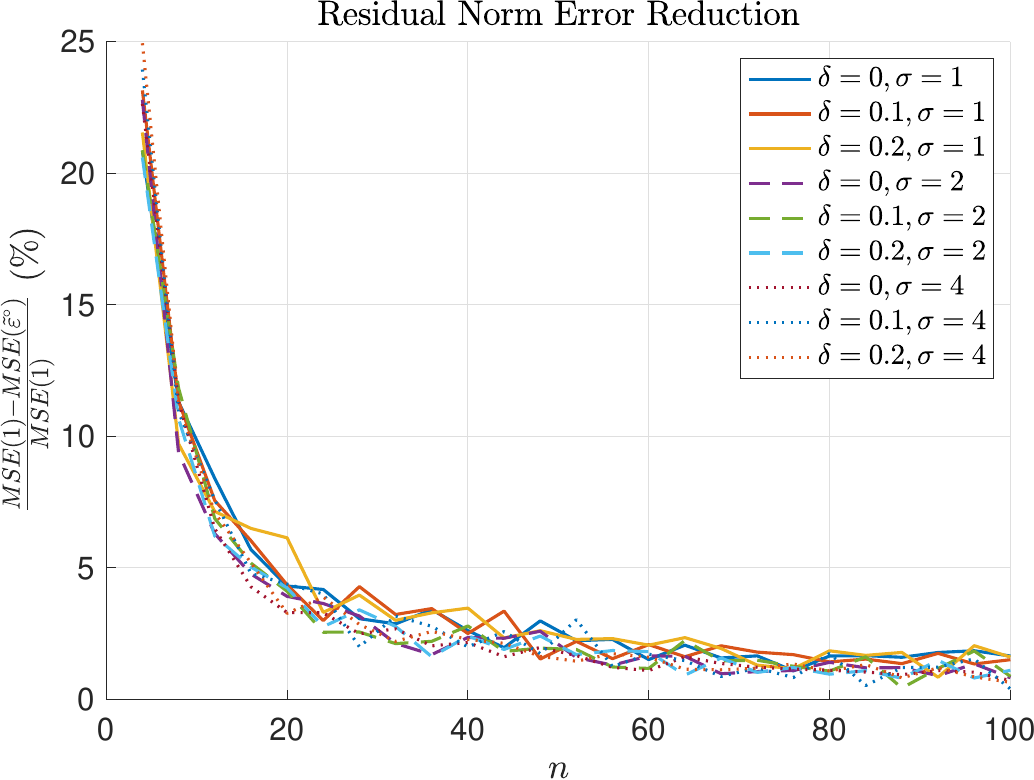}}
    \caption{2D torus example. Error reduction as a function of sample size $n$.}
    \label{fig:2D Torus Error trend}
\end{figure}
\label{sec: numerics}



\paragraph{Summary of numerical experiments.} 
Figure \ref{fig:summary versus trend} plots the normalized MSE of the naive solution against the
operator shifting solution. In the torus and circle cases, the data points are obtained by
varying the sample size $n$, the reward variance $\delta$, and the transition parameter $\sigma$. In the randomly generated MDP case, the data points are
obtained by sampling random MDPs and varying the value of the sample size $n$. The vast majority of
the data points are below the diagonal line, suggesting that operator shifting consistently
reduces the MSE.

\tcb{As a further remark, the numerical result shows that the bounds in Theorem \ref{thm: upper bd} for $\varepsilon^{\star}$ is quite pessimistic. In practice, $\varepsilon^{\star}$ almost always falls in the $(0,1)$ range, even for small $n$.}

\begin{figure}[ht!]
  \centering \subfloat[1D circle]{\includegraphics[width
      =0.4\linewidth]{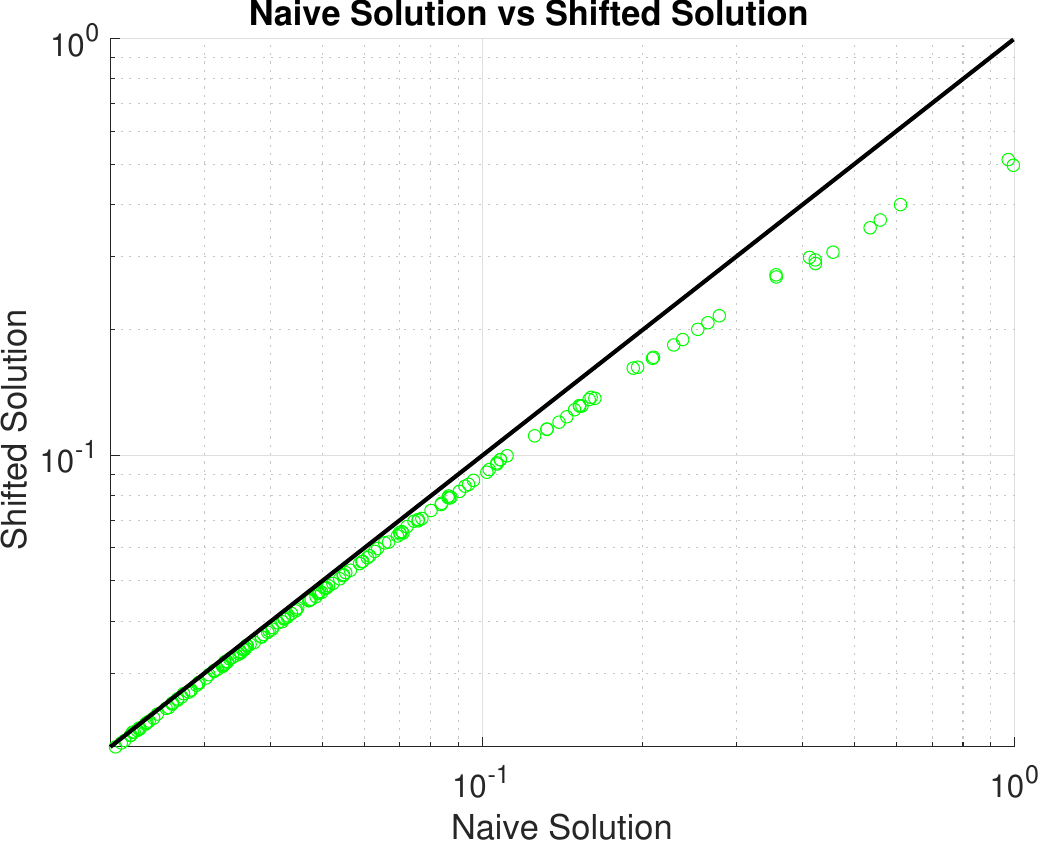}} \subfloat[2D Torus]{\includegraphics[width =0.4\linewidth]{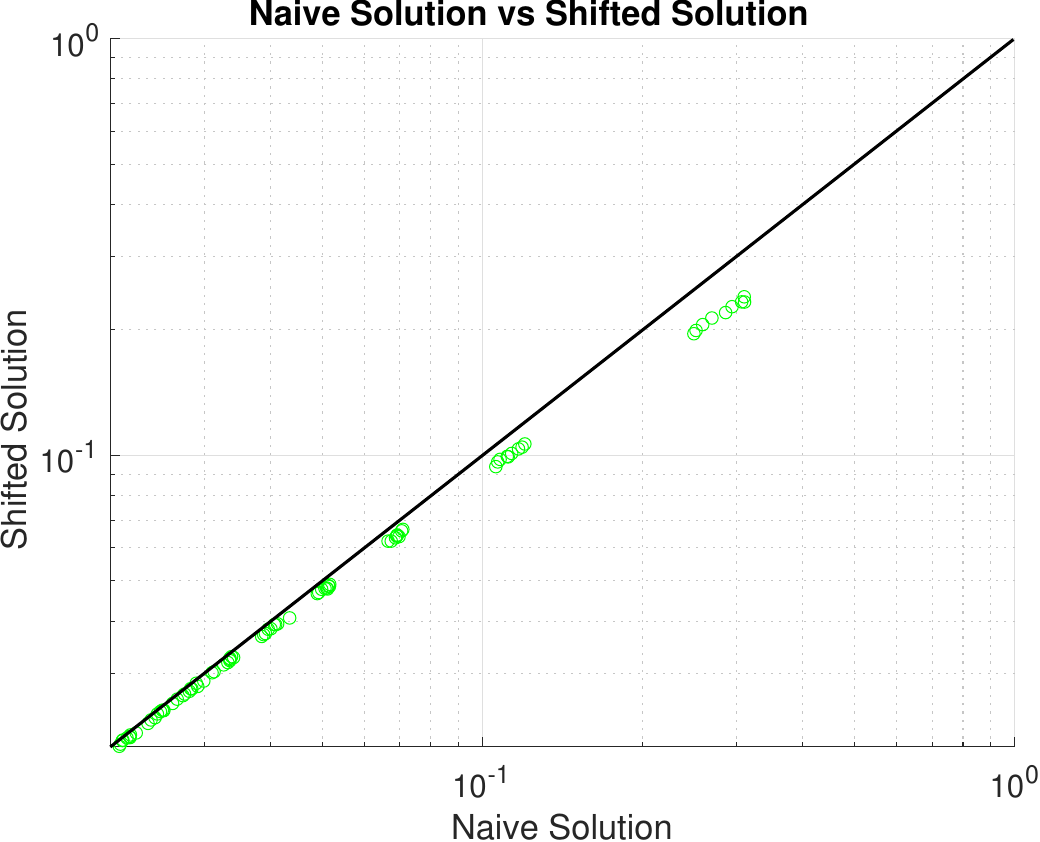}}
  \\ \subfloat[Random dense MDP]{\includegraphics[width
      =0.4\linewidth]{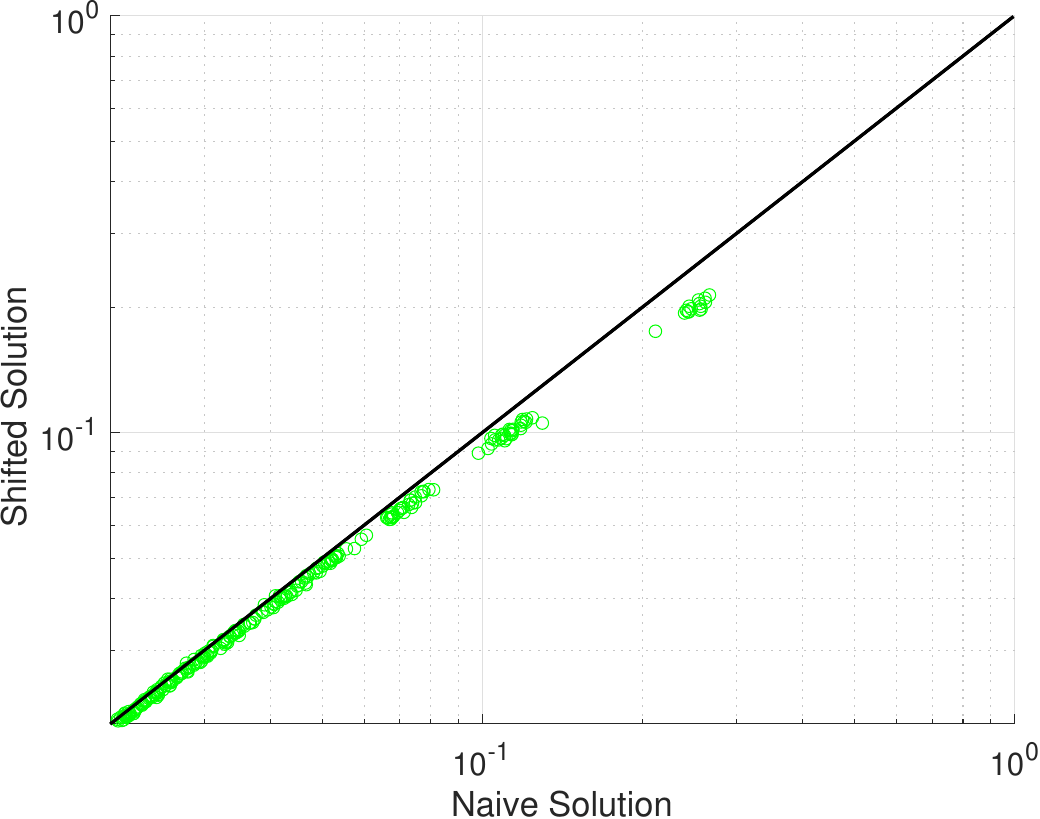}} \subfloat[Random sparse MDP]{\includegraphics[width =0.4\linewidth]{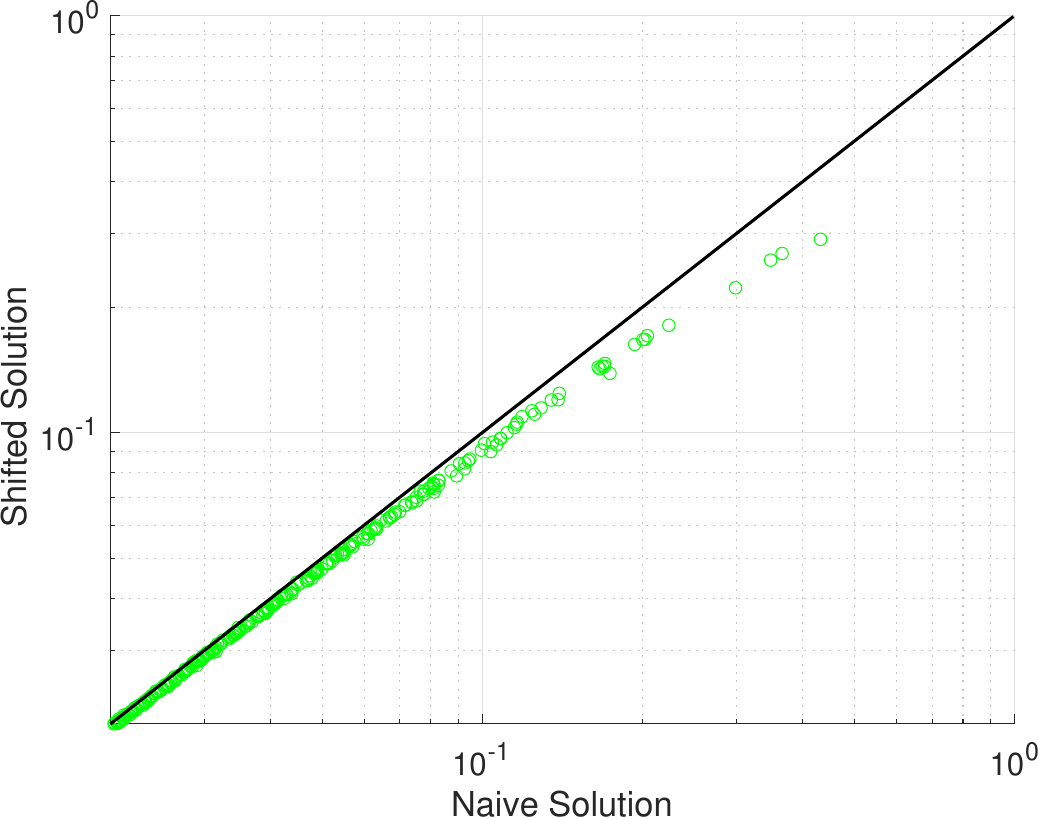}}
  \caption{The normalized MSE of the operator shifting solution is plotted against that of the
    naive solution. All data points are below or close to the diagonal, showing that the operator
    shifting solution outperforms the naive solution in all data points collected.}
  \label{fig:summary versus trend}
\end{figure}

\section{Proofs}\label{sec:proofs}

\subsection{Proof of Lemma \ref{lemma: vstar in Y}.}

\begin{proof}
From \eqref{def of Z Y}, $\l(I - \h{Y}\r)A\hat{v}=\l(\hat{A}A^{-1}\r)A\hat{v}=\hat{A}\hat{v}=\hat{b}$ and
\begin{equation*}
  A\hat{v} = \left(I - \h{Y}\right)^{-1}\hat{b}.
\end{equation*}
Hence (\ref{eqn: eqn1 for operator shifting}) can be written as
\begin{align*}
    \v^{*} &= \frac{
    \mathbb{E}_{\hat{P},\hat{b}}\left[
    b^{\top}M\l(I- \hY\r)^{-1}\hat{b}
    \right]
    }{
    \mathbb{E}_{\hat{P},\hat{b}}\left[
    \hat{b}^{\top}\l(I- \hY\r)^{-\top}M\l(I- \hY\r)^{-1}\hat{b}
    \right]
    }.
\end{align*}
From Assumption 1, $\E[\h{b}] = b$.
Moreover, it follows from Assumption 1 that $\h{P}$ is independent to $\h{b}$. Hence one can write the numerator as
\begin{equation*}
  \mathbb{E}_{\hat{P},\hat{b}}\left[
    b^{\top}M\l(I- \hY\r)^{-1}\hat{b}\right]
    =
    \mathbb{E}_{\hat{P}}\left[
    b^{\top}M\l(I- \hY\r)^{-1}b\right],
\end{equation*}
and the denominator as
\begin{align*}
  &\mathbb{E}_{\hat{P},\hat{b}}\left[
    \hat{b}^{\top}\l(I- \hY\r)^{-\top}M\l(I- \hY\r)^{-1}\hat{b}
    \right]
    \\
    =     &\mathbb{E}_{\hat{P}}\left[    b^{\top}\l(I- \hY\r)^{-\top}M\l(I- \hY\r)^{-1}b    \right]
    +    \mathbb{E}_{\hat{P},\hat{b}}\left[    (b-\hat{b})^{\top}\l(I- \hY\r)^{-\top}M\l(I- \hY\r)^{-1}(b-\hat{b})
    \right].
\end{align*}

One can rewrite the second term in terms of the
variance of $\hat{b}$ by the trace property
\begin{align*}
    &\mathbb{E}_{\hat{P},\hat{b}}\left[
  (b-\hat{b})^{\top}\l(I- \hY\r)^{-\top}M\l(I- \hY\r)^{-1}(b-\hat{b})
  \right] \\ = &\mathbb{E}_{\hat{P}}\left[\tr\left(\mathrm{cov}\left[\hat{b}\right]
  \l(I- \hY\r)^{-\top}M\l(I- \hY\r)^{-1}\right)
  \right],
\end{align*}
and likewise one has
\begin{align*}
  &\mathbb{E}_{\hat{P}}\left[    b^{\top}\l(I- \hY\r)^{-\top}M\l(I- \hY\r)^{-1}b    \right]\\
  = &\mathbb{E}_{\hat{P}}\left[ \tr\l(   b^{\top}b\l(I- \hY\r)^{-\top}M\l(I- \hY\r)^{-1} \r)   \right].
\end{align*}

The entries of $\hat{b}$ are uncorrelated because as defined in \eqref{def bpi},
\[
\mathbb{P}(b_{s_1} = r_{s_1}^{a_1}, b_{s_2} = r_{s_2}^{a_2}) = \pi_{s_1,a_2} \pi_{s_1,a_2} = \mathbb{P}(b_{s_1} = r_{s_1}^{a_1})\mathbb{P}(b_{s_2} = r_{s_2}^{a_2}).
\]
As a result, $\mathrm{cov}\left[\hat{b}\right]$ is a diagonal matrix as claimed.
\end{proof}

\subsection{Proof of Theorem \ref{thm: structure of Y'Y and Y^2} and
derivation of \eqref{eq: vstar}.}
\label{sec: proof of closed-form theorem}

\paragraph{Derivation of \eqref{eq: vstar}.} We first show the derivation of \eqref{eq: vstar}. First one inserts the truncated Neumann series
into the definition of $\v^*$ in \eqref{eq: vstar in Y}. According to \eqref{eq: neumann series},
\begin{equation*}
    \l(I- \hY\r)^{-1} \approx I + \hY + \hY^{2}.
\end{equation*}
One has the following series of approximations by truncating out terms beyond order two
\[
    M\l(I- \hY\r)^{-1} \approx M + M\hY + M\hY^{2},
\]
\[
    \l(I- \hY\r)^{-\top}M\l(I- \hY\r)^{-1} \approx M + M\l(\hY+\hY^2\r) + (\hY^{\top} + (\hY^{\top})^2)M +  \hY^{\top}M\hY.
\]
Note that $\mathbb{E}\left[\h{P}\right] = P$ due to Assumption 1. Thus
$\mathbb{E}\l[\hY\r]=0$. Therefore, taking expectation of the above two terms gives
\begin{equation}\label{eq: numerator in vstar}
    \mathbb{E}_{\hat{P}}\left[M\l(I- \hY\r)^{-1}\right] \approx M + M\mathbb{E}_{\hat{P}}\left[\hY^{2}\right],
\end{equation}
\begin{equation}\label{eq: denominator in vstar}
    \mathbb{E}_{\hat{P}}\left[\l(I- \hY\r)^{-\top}M\l(I- \hY\r)^{-1}\right] \approx M + M\mathbb{E}_{\hat{P}}\left[\hY^2\right] + \mathbb{E}_{\hat{P}}\left[(\hY^{\top})^2\right]M +  \mathbb{E}_{\hat{P}}\left[\hY^{\top}M\hY\right].
\end{equation}
Plugging \eqref{eq: numerator in vstar} and \eqref{eq: denominator in vstar} into \eqref{eq: vstar
  in Y} leads to
\begin{align*}
  \v^{*}    &=
  \frac{
    \mathbb{E}_{\hat{P}}\left[
      b^{\top}M\l(I- \hY\r)^{-1}b\right]
  }{
    \mathbb{E}_{\hat{P}}\left[\tr\left(\l(\mathrm{cov}\left[\hat{b}\right] + b^{\top}b\r)\l(I- \hY\r)^{-\top}M\l(I- \hY\r)^{-1}\right)\right]
  }\\
  &\approx
  \frac{
    \mathbb{E}_{\hat{P}}\left[
      b^{\top} ( M + M\hY^2) b
      \right]
  }{
     \mathbb{E}_{\hat{P}}\left[
      \tr\left(\left(\mathrm{cov}\left[\hat{b}\right] + b^{\top}b\right)
      ( M + \hY^{\top}M\hY + M\hY^{2} + (\hY^{\top})^2M)\right)\right]},
\end{align*}
where the numerator term can be symmetrized so as to get \eqref{eq: vstar}.

\paragraph{Proof of Theorem \ref{thm: structure of Y'Y and Y^2}.}
Let $N = \lv \S \rv$. Denote by $\{\hp_i\}_{i=1}^N$ and $\{p_i\}$ the row vectors of $\h{P}$ and $P$,
respectively:
\begin{equation*}
    \hat{P} = \begin{bmatrix}
    \hp_{1}^{\top}\\
    \dots\\
    \hp_{N}^{\top}
    \end{bmatrix},\quad
    P = \begin{bmatrix}
    p_{1}^{\top}\\
    \dots\\
    p_{N}^{\top}
    \end{bmatrix}.
\end{equation*}

To show that $\v^{\star}$ follows the formula in Theorem \ref{thm: structure of Y'Y and Y^2}, it
suffices to prove the following auxiliary lemma:

\begin{lemma}\label{lemma: auxiliary lemma for YMY and Y^2}
  Assume the following two conditions hold.
  \begin{itemize}
    \item [] {(a)}: \(\h{P},\h{b}\) are unbiased estimators of \(P,b\).
      
    \item [] {(b)}: \(X_{i}\) is independent to \(X_{j}\) whenever \(i \not = j\).
\end{itemize}
  Then one has
  \begin{align}
    &\mathbb{E}\left[\hY^{\top}M\hY\right]
    = \g^2 A^{-\top}\left(\sum_{i=1}^{N}\left[M_{ii}\right]\mathrm{cov}[\hp_{i}]\right)A^{-1},\\
    & \mathbb{E}\left[\hY^2 \right] = \g^2\sum_{i = 1}^{N}\mathrm{diag}\l(e_{i} \r)A^{-\top}\mathrm{cov}[\hp_{i}]A^{-1},\\
    &\mathbb{E}\left[(\hY^{\top})^2 \right]  =  \g^2\sum_{i = 1}^{N}A^{-\top}\mathrm{cov}[\hp_{i}]A^{-1}\mathrm{diag}\l(e_{i} \r),
  \end{align}
  where $\hp_{i}$ is the random vector corresponding to the i-th row of $\h{P}$.
\end{lemma}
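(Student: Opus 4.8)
The plan is to decompose $\hY$ into a sum of independent rank-one pieces indexed by the rows of $P-\hat{P}$, expand the quadratic forms $\hY^{\top}M\hY$ and $\hY^{2}$ termwise, and discard the off-diagonal ($i\neq j$) contributions using independence together with unbiasedness. Write $\delta_i = p_i - \hp_i \in \R^{N}$ for the error in the $i$-th row; condition (a) gives $\E[\delta_i]=0$, condition (b) makes $\hp_i$ and $\hp_j$ (hence $\delta_i$ and $\delta_j$) independent for $i\neq j$, and $\E[\delta_i\delta_i^{\top}] = \mathrm{cov}[\hp_i]$ precisely because $\delta_i$ is centered. Since $\hZ = \g(P-\hat{P}) = \g\sum_{i=1}^{N} e_i\delta_i^{\top}$ and $A^{-1}$ is a deterministic function of $P$, one obtains the workhorse identity $\hY = \hZ A^{-1} = \g\sum_{i=1}^{N} e_i\,\delta_i^{\top}A^{-1}$.

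First I would compute, using $e_i^{\top}Me_j = M_{ij}$, that $\hY^{\top}M\hY = \g^{2}\sum_{i,j} M_{ij}\,A^{-\top}\delta_i\delta_j^{\top}A^{-1}$. Taking expectations, the $i\neq j$ terms drop out since $\E[\delta_i\delta_j^{\top}] = \E[\delta_i]\,\E[\delta_j]^{\top} = 0$, and the $i=j$ terms leave $\g^{2} A^{-\top}\bigl(\sum_i M_{ii}\,\mathrm{cov}[\hp_i]\bigr)A^{-1}$, which is the first claimed identity.

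Next, for $\hY^{2}$ I would expand $\hY^{2} = \g^{2}\sum_{i,j}(\delta_i^{\top}A^{-1}e_j)\,e_i\delta_j^{\top}A^{-1}$, where the factor $\delta_i^{\top}A^{-1}e_j$ is a scalar. When $i\neq j$ this scalar and the vector $\delta_j$ are independent and both centered, so the expectation vanishes; for $i=j$ one has $\E\bigl[(\delta_i^{\top}A^{-1}e_i)\,\delta_i^{\top}\bigr] = \bigl(\E[\delta_i\delta_i^{\top}]A^{-1}e_i\bigr)^{\top} = e_i^{\top}A^{-\top}\mathrm{cov}[\hp_i]$, using symmetry of $\mathrm{cov}[\hp_i]$. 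Since $e_ie_i^{\top} = \diag{e_i}$, this gives $\E[\hY^{2}] = \g^{2}\sum_i \diag{e_i}\,A^{-\top}\mathrm{cov}[\hp_i]A^{-1}$, the second identity. The third identity is then immediate from $(\hY^{\top})^{2} = (\hY^{2})^{\top}$ and linearity of expectation, together with the symmetry of both $\diag{e_i}$ and $\mathrm{cov}[\hp_i]$.

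The argument is routine once the rank-one decomposition of $\hY$ is in place; the only things requiring care are the index/scalar bookkeeping in the $\hY^{2}$ expansion (tracking which factor carries the scalar $\delta_i^{\top}A^{-1}e_j$ and which carries the vector $\delta_j$) and the repeated use of the facts that $A$ is deterministic given $P$ and that $\mathrm{cov}[\hp_i]$ is symmetric. I do not expect any step to be a genuine obstacle.
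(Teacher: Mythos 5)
Your proof is correct and follows essentially the same route as the paper's: both expand $\hY$ row by row (your rank-one form $\hZ=\g\sum_i e_i\delta_i^{\top}$ is just cleaner bookkeeping for the paper's stacked-row computation), kill the $i\neq j$ terms via independence plus zero mean, and identify the surviving $i=j$ terms with $\mathrm{cov}[\hp_i]$, inserting $\diag{e_i}=e_ie_i^{\top}$ and transposing for the third identity. No gaps.
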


Both conditions in Lemma \ref{lemma: auxiliary lemma for YMY and Y^2} are satisfied under Assumption
1. In Theorem \ref{thm: structure of Y'Y and Y^2}, one has $X_{i} \sim
\mathrm{multinomial}(n,p_{i})$ with the following covariance structure
\begin{equation}\label{eq: covaraince of X_i}
  \mathrm{cov}[\hp_{i}] = \frac{1}{n}\left[\diag{p_{i}} - p_{i}p_{i}^{\top}\right] = B_{i}.
\end{equation}
Plugging in \eqref{eq: covaraince of X_i} in Lemma \ref{lemma: auxiliary lemma for YMY and Y^2}
immediately gives the expectation-free form in Lemma \ref{lemma: vstar in Y}, which proves Theorem
\ref{thm: structure of Y'Y and Y^2}.

\begin{proof} (\textbf{Proof of Lemma \ref{lemma: auxiliary lemma for YMY and Y^2}.})
We first calculate $\mathbb{E}\left[\hY^{\top}M\hY\right]$. To do this, we rely on the assumption that the $i$-th row $\hp_{i}$ is independent to $\hp_{j}$ whenever
$i\not=j$. As a consequence, the rows of $\hZ$ are independent. Then, for any matrix $M$, one has
\[
\mathbb{E}\left[\hY^{\top}M\hY\right] =\mathbb{E}\left[A^{-\top}\hZ^{\top}M\hZ A^{-1}\right] =
A^{-\top}\mathbb{E}\left[\hZ^{\top}M\hZ\right]A^{-1}.
\]
By denoting the rows of $\hZ$ by $\hz_{1}^{\top}, \dots, \hz_{N}^{\top}$,
\begin{equation*}
  \hZ^{\top}M\hZ = \begin{bmatrix} \hz_{1} & \dots & \hz_{N}
  \end{bmatrix}M\begin{bmatrix}
  \hz_{1}^{\top}\\
  \dots\\
  \hz_{N}^{\top}
  \end{bmatrix} = \sum_{i,j = 1}^{N}\hz_{i}M_{ij}\hz_{j}^{\top}.
\end{equation*}
By taking the expectation, the only non-zero terms are the ones with $i=j$. Hence,
\begin{equation*}
  \mathbb{E}\left[\hZ^{\top}M\hZ\right] = \sum_{i = 1}^{N}M_{ii}\mathbb{E}\left[\hz_{i}\hz_{i}^{\top}\right].
\end{equation*}
Then by definition of $\hZ$ one has
\begin{equation}\label{eqn: covaraince structure of Z'Z}
  \mathbb{E}\left[\hz_{i}\hz_{i}^{\top}\right] =
  \g^2\mathbb{E}\left[\left(\hp_{i} - \mathbb{E}\left[\hp_{i}\right]\right)\left(\hp_{i} - \mathbb{E}\left[\hp_{i}\right]\right)^{\top}\right] =
  \g^2\mathrm{cov}[\hp_{i}].
\end{equation}
Hence one can get the first part of Lemma \ref{lemma: auxiliary lemma for YMY and Y^2}, which is
\[
\mathbb{E}\left[\hY^{\top}M\hY\right] =
A^{-\top}\mathbb{E}\left[\hZ^{\top} M \hZ\right]A^{-1} = \gamma^2
A^{-\top}\left(\sum_{i = 1}^{N}M_{ii}\mathrm{cov}[\hp_{i}]\right)A^{-1}.
\]

Now we move on to proving the form of $\mathbb{E}\left[\hY^{2}\right]$. Writing out $\hY^{2}$ explicitly
\begin{equation*}
  \hY^{2} = \hZ A^{-1}\hZ A^{-1} = \begin{bmatrix}
    \hz_{1}^{\top}A^{-1}\\
    \dots\\
    \hz_{N}^{\top}A^{-1}
    \end{bmatrix}
    \begin{bmatrix}
    \hz_{1}^{\top}\\
    \dots\\
    \hz_{N}^{\top}
    \end{bmatrix}A^{-1}
    = \sum_{i = 1}^{N}\sum_{j = 1}^{N}\begin{bmatrix}
      0
      \\
    \dots
    \\
    \hz_{i}^{\top}A^{-1}\\
    \dots\\
    0
    \end{bmatrix}
    \begin{bmatrix}
    0
    \\
    \dots
    \\
    \hz_{j}^{\top}\\
    \dots\\
    0
    \end{bmatrix}A^{-1}.
\end{equation*}
After the expectation, the only non-zero terms are $i = j$. Thus one has
\begin{equation*}
    \mathbb{E}\left[\hY^{2} \right]
    =
    \sum_{i = 1}^{N}\mathbb{E}\left[\begin{bmatrix}
    0
    \\
    \dots
    \\
    \hz_{i}^{\top}A^{-1}\\
    \dots\\
    0
    \end{bmatrix}
    \begin{bmatrix}
    0
    \\
    \dots
    \\
    \hz_{i}^{\top}\\
    \dots\\
    0
    \end{bmatrix}\right]A^{-1},
\end{equation*}
with
\begin{equation*}
    \left(\begin{bmatrix}
    0
    \\
    \dots
    \\
    \hz_{i}^{\top}A^{-1}\\
    \dots\\
    0
    \end{bmatrix}
    \begin{bmatrix}
    0
    \\
    \dots
    \\
    \hz_{i}^{\top}\\
    \dots\\
    0
    \end{bmatrix}\right)_{jk} =
    \begin{cases}
        \sum_{l = 1}^{N}{\hZ_{il}A^{-1}_{li} \hZ_{ik}}  & j = i\\
        0 & j \not = i.
    \end{cases}
\end{equation*}

For the matrix $A^{-\top}\hz_{i}\hz_{i}^{\top}$, note that
\begin{equation*}
  \left[A^{-\top}\hz_{i}\hz_{i}^{\top}\right]_{jk} = (A^{-\top}\hz_{i})_{j}(\hz_{i}^{\top})_{k} =
  \sum_{l=1}^{N}A^{-\top}_{jl}\hZ_{il}\hZ_{ik} = \sum_{l=1}^{N}A^{-1}_{lj}\hZ_{il}\hZ_{ik}.
\end{equation*}
Applying $j = i$ leads to
\begin{equation*}
  \left[\mathrm{diag}\l(e_{i} \r)A^{-\top}\hz_{i}\hz_{i}^{\top}\right]_{jk} = 
  \begin{cases}
    \sum_{l=1}^{N}\hZ_{il}A^{-1}_{li}\hZ_{ik} & j = i \\
    0 & j \not = i
  \end{cases}.
\end{equation*}
Hence we have 
\begin{align*}
  \mathbb{E}\left[\hY^{2} \right]
  &
  = 
  \sum_{i = 1}^{N}\mathbb{E}\left[\begin{bmatrix}
      0
      \\
      \dots
      \\
      \hz_{i}^{\top}A^{-1}\\
      \dots\\
      0
    \end{bmatrix}
    \begin{bmatrix}
      0
      \\
      \dots
      \\
      \hz_{i}^{\top}\\
      \dots\\
      0
    \end{bmatrix}\right]A^{-1} = 
  \sum_{i = 1}^{N}\mathbb{E}\left[\mathrm{diag}\l(e_{i} \r)\begin{bmatrix}
      0
      \\
      \dots
      \\
      \hz_{i}^{\top}A^{-1}\\
      \dots\\
      0
    \end{bmatrix}
    \begin{bmatrix}
      0
      \\
      \dots
      \\
      \hz_{i}^{\top}\\
      \dots\\
      0
    \end{bmatrix}\right]A^{-1} \\
  &
  = 
  \sum_{i = 1}^{N}\mathbb{E}\left[\mathrm{diag}\l(e_{i} \r)A^{-\top}\hz_{i}\hz_{i}^{\top}\right]A^{-1}
  = 
  \sum_{i = 1}^{N}\mathrm{diag}\l(e_{i} \r)A^{-\top}\mathbb{E}\left[\hz_{i}\hz_{i}^{\top}\right]A^{-1}.
\end{align*}
Taking transpose results in
\begin{equation*}
  \mathbb{E}\left[(\hY^{\top})^2 \right] = \sum_{i = 1}^{N}A^{-\top}\mathbb{E}\left[\hz_{i}\hz_{i}^{\top}\right]A^{-1}\mathrm{diag}\l(e_{i} \r).
\end{equation*}
\end{proof}

\subsection{Proof of Lemma \ref{lemma: structure of objective function} and Corollary \ref{corollary: structure of error reduction}.}\label{sec: proof of corollary on error asymptotics}

We first prove  Lemma \ref{lemma: structure of objective function}.
\begin{proof}

(\textbf{Proof of Lemma \ref{lemma: structure of objective function}.})

  Going back to the original quadratic optimization problem, one has
  \begin{equation}\label{eqn: objective function for epsilon}
    \E_{\hat{P},\hat{b}}\ll b- \varepsilon A \hat{v} \rl_{M}^{2} = \varepsilon^2\E_{\hat{P},\hat{b}}\ll A \hat{v} \rl_{M}^{2} - 2\varepsilon\E_{\hat{P},\hat{b}}\left[b^{\top}MA \hat{v}\right] + \ll b \rl_{M}^{2}.
  \end{equation} 
  Using Lemma \ref{lemma: vstar in Y} and the second-order approximation in equation \eqref{eq:
    vstar}, one has
  \begin{equation*}
    \begin{aligned}
      \E_{\hat{P},\hat{b}}\left[b^{\top}MA \hat{v}\right]
      &= \mathbb{E}_{\hat{P}}\left[
        b^{\top} ( M + \frac{M\hY^2+ (\hY^{\top})^{2}M}{2}) b
        \right] + h.o.t.,
    \end{aligned}
  \end{equation*} 
  and
  \begin{equation*}
    \begin{aligned}
      \E_{\hat{P},\hat{b}}\ll A \hat{v} \rl_{M}^{2} 
      &=  \mathbb{E}_{\hat{P}}\left[
        b^{\top} ( M + \hY^{\top}M\hY + M\hY^{2} + (\hY^{\top})^2M) b\right]\\
      &+ \mathbb{E}_{\hat{P}}\left[\tr\left(\mathrm{cov}\left[\hat{b}\right]
        ( M + \hY^{\top}M\hY + M\hY^{2} + (\hY^{\top})^2M)\right)\right] + h.o.t.,
    \end{aligned}
  \end{equation*}
  where $h.o.t.$ stands for high order terms.

  We now show that $h.o.t. = O\l(n^{-\frac{3}{2}}\r)$. The expectation of the third order or higher
  terms in $\hY$ is computed by moments of third order or
  higher in $\hZ$. Under Assumption 1, rows of $\hZ$ are independent, and
  hence moments of $\hZ$ are linear combinations of moments in multinomial
  distribution. Each row of matrix $\hZ$ is an average of $n$
  random variables with mean zero, which is why its moments of third order or higher decay at the
  rate of at least $O\l(n^{-\frac{3}{2}}\r)$ by use of the Marcinkiewicz-Zygmund inequality.
  
  One can then plug in the explicit formula from Lemma \ref{lemma: auxiliary lemma for YMY and Y^2}
  and \eqref{eq: covaraince of X_i}, leading to
  \begin{equation}\label{eq: first order term}
    \E_{\hat{P},\hat{b}}\left[b^{\top}MA \hat{v}\right] = \ll b \rl_{M}^{2} + b^{\top}Hb/2 + O\l(n^{-\frac{3}{2}}\r)= \ll b \rl_{M}^{2} + h/2 +O\l(n^{-\frac{3}{2}}\r).
  \end{equation}
  and
    \begin{equation}\label{eq: second order term}
    \begin{aligned}
        \E_{\hat{P},\hat{b}}\ll A \hat{v} \rl_{M}^{2} &= \ll b \rl_{M}^{2} + b^{\top}Gb + b^{\top}Hb + \tr\left(\mathrm{cov}\left[\hat{b}\right]
        \l( M + G + H\r)\right) + O\l(n^{-\frac{3}{2}}\r)\\
        &= \ll b \rl_{M}^{2} + h + g + t+ O\l(n^{-\frac{3}{2}}\r).
    \end{aligned}
    \end{equation}
    Plugging it into \eqref{eqn: objective function for epsilon} results in \eqref{eq: objective
      under Taylor second-order}.
    
    We now move to prove \eqref{eq: v subtracts vstar}.  From \eqref{eqn: objective function for
      epsilon} and the results before, one has
    \begin{equation*}
        \v^{*} =  \frac{\ll b \rl_{M}^{2} + h/2+ O\l(n^{-\frac{3}{2}}\r)}{\ll b \rl_{M}^{2} + h + g + t+ O\l(n^{-\frac{3}{2}}\r)}.
    \end{equation*}
    
    On the other hand, Theorem \ref{thm: structure of Y'Y and Y^2} proves that \(\v^{\circ}\) follows the following form:
    \begin{equation}\label{eq: vstar in g h t}
        \v^{\star} =  \frac{\ll b \rl_{M}^{2} + h/2}{\ll b \rl_{M}^{2} + h + g + t}.
    \end{equation}
    
    Moreover, from Theorem \ref{thm: structure of Y'Y and Y^2}, it follows that $G,H,\mathrm{cov}\left[\hat{b}\right] \propto \frac{1}{n}$, and therefore $g,h,t \propto \frac{1}{n}$. Hence \eqref{eq: v subtracts vstar} holds and $\v^{\star}$ is asymptotically optimal.

\end{proof}


\begin{proof}
(\textbf{Proof of Corollary \ref{corollary: structure of error reduction}.})



Throughout this proof, we use the fact that $g,h,t \propto \frac{1}{n}$ as in the proof of Lemma \ref{lemma: structure of objective function}. Without loss of generality, assume that $\ll b \rl_{M}^{2} = 1$.
The proof is organized as follows. First, we prove that \(\mathrm{MSE}(1) - \mathrm{MSE}(\v^{*}) =O\l(n^{-2}\r)\). Second, we prove that \(\mathrm{MSE}(\v^{*}) - \mathrm{MSE}(\v^{\circ}) =O\l(n^{-3}\r)\). As a consequence, one obtains \(\mathrm{MSE}(1) - \mathrm{MSE}(\v^{\circ}) =O\l(n^{-2}\r)\). Then, note that
\begin{equation}
    \mathrm{MSE}(1) =  (g+t) + O\l(n^{-\frac{3}{2}}\r) = O\l(1/n \r), 
\end{equation}
and so the relative error is of order \(O\l(1/n \r)\) as claimed, and \(\eta\) is positive for \(n\) sufficiently large.

We first estimate \(\mathrm{MSE}(1) - \mathrm{MSE}(\v^{*})\).
\tcb{Plugging in \eqref{eqn: eqn1 for operator shifting}, one has}
\begin{equation}\label{eq: order of oracle plug-in error}
    \mathrm{MSE}(1) - \mathrm{MSE}(\v^{*}) = 
    \frac{\left(\ll A \hat{v} \rl_{M}^{2} - \E_{\hat{P},\hat{b}}\left[b^{\top}MA \hat{v}\right]\right)^2}{\ll A \hat{v} \rl_{M}^{2}}.
\end{equation}

Under the assumption that $\ll b \rl_{M}^{2} = 1$, \eqref{eq: first order term} and \eqref{eq: second order term} shows
\begin{equation*}
    \E_{\hat{P},\hat{b}}\ll A \hat{v} \rl_{M}^{2} = 1 + h + g + t+ O\l(n^{-\frac{3}{2}}\r), \E_{\hat{P},\hat{b}}\left[b^{\top}MA \hat{v}\right] = 1 + h/2 +O\l(n^{-\frac{3}{2}}\r).
\end{equation*}

Thus, plugging in the previously derived terms into \eqref{eq: order of plug-in error}, one has
\begin{equation}
    \mathrm{MSE}(1) - \mathrm{MSE}(\v^{*}) =  \frac{\l(g+h/2+t+O\l(n^{-\frac{3}{2}}\r)\r)^2}{1+g+h+t+ O\l(n^{-\frac{3}{2}}\r)} = O\l(n^{-2}\r).
\end{equation}

We then estimate \(\mathrm{MSE}(\v^{*}) - \mathrm{MSE}(\v^{\circ})\):
\begin{equation}\label{eq: order of plug-in error}
    \mathrm{MSE}(\v^{*}) - \mathrm{MSE}(\v^{\circ}) =  (\left(\v^{*}\right)^2-\left(\v^{\circ}\right)^2)\E_{\hat{P},\hat{b}}\ll A \hat{v} \rl_{M}^{2} - 2(\v^{*}-\v^{\circ})\E_{\hat{P},\hat{b}}\left[b^{\top}MA \hat{v}\right].
\end{equation}

Then, one uses
\[
\v^{*} = \frac{\E_{\hat{P},\hat{b}}\left[b^{\top}MA \hat{v}\right]}{\E_{\hat{P},\hat{b}}\ll A \hat{v} \rl_{M}^{2}}, \quad  \v^{\circ} = \frac{1+h/2 }{1+h+g+t}.
\]

By simple algebra, one obtains from \eqref{eq: order of plug-in error} that
\begin{equation}\label{eq: order of plug-in error step 2}
    \mathrm{MSE}(\v^{*}) - \mathrm{MSE}(\v^{\circ}) =  -\frac{
    \l(
    (1+h+g+t)\E_{\hat{P},\hat{b}}\left[b^{\top}MA \hat{v}\right] - \E_{\hat{P},\hat{b}}\ll A \hat{v} \rl_{M}^{2}(1+h/2)
    \r)^2
    }{\l(\E_{\hat{P},\hat{b}}\ll A \hat{v} \rl_{M}^{2}\r)(1+h+g+t)^2},
\end{equation}

Importantly, for the numerator term in \eqref{eq: order of plug-in error step 2}, note that \[(1+h+g+t)\E_{\hat{P},\hat{b}}\left[b^{\top}MA \hat{v}\right] = (1+h+g+t)(1+h/2) + O\l(n^{-\frac{3}{2}}\r),\]
and likewise one has
\[
\E_{\hat{P},\hat{b}}\ll A \hat{v} \rl_{M}^{2}(1+h/2) = (1+h+g+t)(1+h/2) + O\l(n^{-\frac{3}{2}}\r).
\]

Consequently, the numerator term in \eqref{eq: order of plug-in error step 2} is of order \(O\l(n^{-3}\r)\).
Thus
\begin{equation}
    \mathrm{MSE}(\v^{*}) - \mathrm{MSE}(\v^{\circ}) =  \frac{O\l(n^{-3}\r)}{\l(1+g+h+t+ O\l(n^{-\frac{3}{2}}\r)\r)\l(1+g+h+t\r)^2} = O\l(n^{-3}\r),
\end{equation}
as is desired.
\end{proof}

\subsection{Proof of Theorem \ref{thm: upper bd}.}
To prove Theorem \ref{thm: upper bd}, one first finds a tight bound for $A^{-1}b$ and
$A^{-1}\mathrm{diag}(e_i)b$. The tight upper and lower bounds for $A^{-1}b$ are stated in Lemma
\ref{lemma: bd for x}. Then, the upper bounds for $\ll A^{-1} b \rl^2$ and $\sum_i\ll
A^{-1}\mathrm{diag}(e_i)b \rl^2$ are listed in Corollary \ref{coro: ineq}. Finally, based on
Corollary \ref{coro: ineq}, we derive the bound for $\v^\star$ in Theorem \ref{thm: upper bd}.
\begin{lemma}\label{lemma: bd for x}
For any transition matrix $P\in\R^{\lv \S \rv\times\lv \S \rv}$, vector $b\in\R^{\lv \S \rv}$ and $\g \in(0,1)$, 
\[
     b + \frac{\g}{1-\g} b_m{\bf 1}\leq \l( I-\g P \r)^{-1} b \leq  b + \frac{\g}{1-\g}  b_M{\bf 1},
\]
where $b_m = \min_s b_s, b_M = \max_s b_s$, \tcb{and  inequality between vectors denotes an entry-wise inequality}.
\end{lemma}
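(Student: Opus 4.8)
The plan is to expand $\l(I-\g P\r)^{-1}$ in a Neumann series and exploit the fact that all powers of a transition matrix are again row-stochastic. Since $P$ has nonnegative entries and unit row sums, $\ll P \rl_{\infty} = 1$, hence $\rho(\g P) \le \g\ll P\rl_{\infty} = \g < 1$ and the Neumann series converges:
\[
\l(I-\g P\r)^{-1}b \;=\; \sum_{k=0}^{\infty}\g^{k} P^{k} b \;=\; b + \sum_{k=1}^{\infty}\g^{k} P^{k} b .
\]

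First I would observe that each $P^{k}$ for $k\ge 0$ is row-stochastic, since a product of row-stochastic matrices is row-stochastic. Consequently, for every $k\ge 1$ and every state $s$, the entry $\l(P^{k} b\r)_{s} = \sum_{s'}\l(P^{k}\r)_{ss'}b_{s'}$ is a convex combination of the entries of $b$; therefore $b_{m} \le \l(P^{k} b\r)_{s} \le b_{M}$ for all $k\ge 1$, while $P^{0} b = b$. Summing these entrywise bounds against the geometric weights $\g^{k}$ and using $\sum_{k\ge 1}\g^{k} = \frac{\g}{1-\g}$ gives, for every $s$,
\[
b_{s} + \frac{\g}{1-\g}b_{m} \;\le\; \l(\l(I-\g P\r)^{-1}b\r)_{s} \;\le\; b_{s} + \frac{\g}{1-\g}b_{M} ,
\]
which is exactly the asserted (entrywise) vector inequality. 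The passage to the limit is harmless: every partial sum $b+\sum_{k=1}^{K}\g^{k} P^{k} b$ already obeys these bounds, and they are preserved as $K\to\infty$; alternatively one bounds the tail $\sum_{k>K}\g^{k} P^{k} b$ directly in the $\infty$-norm.

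There is no serious obstacle here: the statement is essentially the positivity of the Neumann series combined with row-stochasticity of $P$. If one prefers to avoid series entirely, an equivalent route is a maximum-principle argument applied to the fixed-point identity $x = b + \g Px$ for $x := \l(I-\g P\r)^{-1}b$. Taking $m := \max_{s} x_{s}$ and evaluating the identity at a maximizing coordinate gives $m \le b_{M} + \g m$, i.e. $m \le \frac{b_{M}}{1-\g}$; substituting the bound $(Px)_{s} \le m$ back into $x_{s} = b_{s} + \g(Px)_{s}$ yields the upper estimate $x_{s} \le b_{s} + \frac{\g}{1-\g}b_{M}$, and the symmetric argument with $\min_{s} x_{s}$ yields the lower estimate.
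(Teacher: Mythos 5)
Your proposal is correct, but your primary argument takes a genuinely different route from the paper. You expand $\l(I-\g P\r)^{-1}$ as the Neumann series $\sum_{k\ge 0}\g^{k}P^{k}$, note that every $P^{k}$ is row-stochastic so each entry of $P^{k}b$ lies in $[b_m,b_M]$, and sum the geometric weights to get the two-sided bound; this is sound, since $\rho(\g P)\le\g\ll P\rl_{\infty}=\g<1$ guarantees convergence and the entrywise bounds pass to the limit. The paper instead argues directly on the linear system $\l(I-\g P\r)x=b$ via an extremal-coordinate (maximum-principle) argument: evaluating the equation at the coordinate where $x$ is minimal gives $x_s\ge b_s/(1-\g)\ge b_m/(1-\g)$, and feeding that back into the other rows gives $x_j\ge b_j+\frac{\g}{1-\g}b_m$, with the symmetric argument for the maximum — which is essentially the alternative you sketch in your last paragraph, so you have in effect recovered the paper's proof as a side remark. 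The Neumann-series route is arguably more transparent and buys a little more for free: it exhibits $\l(I-\g P\r)^{-1}$ as a nonnegative operator, from which the companion bound $\lv\l(I-\g P\r)^{-1}b\rv\le\l(I-\g P\r)^{-1}\lv b\rv$ (Lemma \ref{lemma: abs x} in the paper) is immediate, whereas the paper's extremal-coordinate argument is more elementary, avoids any series or convergence discussion, and works coordinate-by-coordinate with nothing beyond nonnegativity and unit row sums. Either proof is complete and acceptable.
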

\begin{proof}
Let $x = \l( I-\g P \r)^{-1}b$, and $s = \argmin_i x_i$, then the $s$-th row of $\l( I-\g P \r)x = b$ is
\[
    b_s = x_s - \g \sum_t P_{st}x_t \leq x_s - \g \sum_t P_{st}x_s = \l(1-\g\r)x_s, 
\]
which implies
\begin{equation}\label{eq: pf_1}
    x_s \geq \frac{b_s}{1-\g} \geq \frac{b_m}{1-\g},
\end{equation}
where $b_m = \min_s b_s$. For $\forall j \neq s$, one has
\[
b_j = x_j - \g \sum_t P_{jt}x_t \leq x_j - \g \sum_t P_{st}x_s = x_j - \g x_s \leq x_j - \g \frac{b_m}{1-\g}, 
\]
which yields,
\begin{equation}\label{eq: pf_2}
    x_j \geq b_j + \frac{\g}{1-\g} b_m.
\end{equation}
Combining \eqref{eq: pf_1} and \eqref{eq: pf_2} gives
\[
x = \l( I-\g P \r)^{-1}b \geq b + \frac{\g}{1-\g}b_m{\bf 1}.
\]

On the other hand, let $l = \argmax_i x_i$, then the $l$-th row of $\l( I-\g P \r)x = b$ is
\[
b_l = x_l - \g\sum_t P_{lt}x_t \geq x_l - \g \sum_t P_{lt}x_l = \l(1-\g\r)x_l,
\]
which implies
\begin{equation}\label{eq: pf_3}
    x_l \leq \frac{b_l}{1-\g} \leq \frac{b_M}{1-\g},
\end{equation}
where $b_M = \max_s b_s$. For $\forall j \neq l$, one has
\[
    b_j = x_j - \g\sum_tP_{jt}x_t \geq x_j - \g \sum_tP_{jt}x_l \geq x_j - \g \frac{b_M}{1-\g},
\]
which yields,
\begin{equation}\label{eq: pf_4}
    x_j \leq b_j + \frac{\g}{1-\g} b_M.
\end{equation}
Combining \eqref{eq: pf_3} and \eqref{eq: pf_4} gives
\[
x = \l( I-\g P \r)^{-1}b \leq b + \frac{\g}{1-\g} b_M{\bf 1},
\]
which completes the proof. 
\end{proof}

\begin{bluetext}

\begin{lemma}\label{lemma: abs x}
\tcb{For a vector \(v = (v_{i})_{i= 1}^{d}\), define \(\lv v \rv := (\lv v_{i} \rv)_{i= 1}^{d}\), i.e. the entry-wise absolute value of \(v\).} Suppose a matrix $Q$ has only non-negative entries. Then, for any vector $v\in\R^{\lv \S \rv}$ and $\g\in(0,1)$, 
\[
\lv Q v \rv \leq Q \lv v \rv .
\]
\end{lemma}
\begin{proof}
Define $x =  Q v$. Denote by $v_+, v_-$ the positive and negative parts of $v$, respectively. That is, $(v_+)_i = v_i\mathds{1}_{v_i > 0}$ and $(v_-)_i = v_i\mathds{1}_{v_i \leq 0}$. Since \(Q\) has only non-negative entries, it follows \(Q v_+ \geq {\bf 0}\) and \(Q v_- \leq {\bf 0}\). 
Because $x = Q v_+ + Q v_-$, one has
$x \leq Q v_+ - Q v_-$ and 
$-x \geq -Q v_+ + Q v_-$.
Hence
\[
\lv x\rv \leq Q v_+ - Q v_- = Q  \lv v \rv.
\]


\end{proof}
\end{bluetext}

\begin{corollary}\label{coro: ineq}
For any transition matrix $P$, vector $b\in\R^{\lv \S \rv}$ and $\g\in(0,1)$, one has, 
\[
\begin{aligned}
&\ll \l( I-\g P \r)^{-1}b \rl_2 \leq \ll k\rl_2, \\& \sum_i \ll \l( I-\g P \r)^{-1}\diag(e_i)b \rl^2_2 \leq \ll k\rl^2_2,
\\
&\sum_i \ll \l( I-\g P \r)^{-1}\diag(e_i)b \rl_2 \leq \sqrt{\lv \S \rv}\ll k\rl_2,
\end{aligned}
\]
where $k = \frac{1}{1-\g}\l( \l(1-\g\r)\lv b \rv + \g b_M {\bf 1}\r)$ with $b_M = \max_i |b_i|$ and
\[
    \ll k \rl^2_2 = \frac1{\l(1-\g\r)^2}\l(\l(1-\g\r)^2\ll b\rl_2^2 +\lv \S \rv\g^2b_M^2 + 2\g\l(1-\g\r)b_M\ll b\rl_1\r).
\]
\end{corollary}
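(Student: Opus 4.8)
The plan is to route everything through a single entrywise comparison. Write $C = \l( I - \g P\r)^{-1}$ and first record that $C$ has nonnegative entries — either from the Neumann series $C = \sum_{k\ge 0}\g^k P^k$, which converges since $\g<1$, or by applying the lower bound of Lemma \ref{lemma: bd for x} to each $e_i$. Next, applying Lemma \ref{lemma: abs x} and then the upper bound of Lemma \ref{lemma: bd for x} to the nonnegative vector $\lv b\rv$ — whose entrywise maximum is $b_M$ and whose minimum is $\ge 0$ — gives the chain
\[
{\bf 0}\;\le\;\lv C b\rv\;\le\; C\lv b\rv\;\le\; \lv b\rv + \tfrac{\g}{1-\g}b_M{\bf 1}\;=\;k .
\]
Since all vectors here are nonnegative and ordered, monotonicity of the $\ell_2$ norm on the nonnegative orthant yields $\ll Cb\rl_2 \le \ll C\lv b\rv\rl_2 \le \ll k\rl_2$, which is the first inequality.

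For the second inequality I would expand both sides as quadratic forms in $C$. Since $\mathrm{diag}(e_i)b = b_i e_i$, one has $\ll C\,\mathrm{diag}(e_i)b\rl_2^2 = b_i^2\ll C e_i\rl_2^2 = b_i^2\sum_j C_{ji}^2$, hence $\sum_i \ll C\,\mathrm{diag}(e_i)b\rl_2^2 = \sum_{i,j}C_{ji}^2 b_i^2$. On the other hand $\ll C\lv b\rv\rl_2^2 = \sum_j\bigl(\sum_i C_{ji}\lv b_i\rv\bigr)^2 = \sum_{i,i'}\lv b_i\rv\,\lv b_{i'}\rv\sum_j C_{ji}C_{ji'}$, whose diagonal part ($i = i'$) is exactly $\sum_{i,j}C_{ji}^2 b_i^2$. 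Because every $C_{ji}\ge 0$, each off-diagonal term $\lv b_i\rv\lv b_{i'}\rv\sum_j C_{ji}C_{ji'}$ is nonnegative, so $\sum_i \ll C\,\mathrm{diag}(e_i)b\rl_2^2 \le \ll C\lv b\rv\rl_2^2 \le \ll k\rl_2^2$ by the chain above. The third inequality is then immediate from Cauchy–Schwarz: $\sum_i \ll C\,\mathrm{diag}(e_i)b\rl_2 \le \sqrt{\lv \S \rv}\,\bigl(\sum_i \ll C\,\mathrm{diag}(e_i)b\rl_2^2\bigr)^{1/2}\le \sqrt{\lv \S \rv}\,\ll k\rl_2$.

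It remains to verify the explicit formula for $\ll k\rl_2^2$, which is a direct expansion: since $k_j = \lv b_j\rv + \tfrac{\g}{1-\g}b_M$,
\[
\ll k\rl_2^2 = \sum_j\Bigl(\lv b_j\rv + \tfrac{\g}{1-\g}b_M\Bigr)^2 = \ll b\rl_2^2 + \tfrac{2\g}{1-\g}b_M\ll b\rl_1 + \lv \S \rv\,\tfrac{\g^2}{(1-\g)^2}b_M^2,
\]
and pulling out $(1-\g)^{-2}$ gives the stated expression.

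The one nonroutine point is the second inequality: the naive estimate $\ll C e_i\rl_2^2 \le \ll e_i + \tfrac{\g}{1-\g}{\bf 1}\rl_2^2$ applied column by column overcounts badly (by a factor that grows with $\lv \S \rv$), so the argument must instead view $\sum_i \ll C\,\mathrm{diag}(e_i)b\rl_2^2$ as the diagonal block of the single quadratic form $\ll C\lv b\rv\rl_2^2$ and use the entrywise nonnegativity of $C$ to discard the off-diagonal contribution. Everything else is bookkeeping on top of Lemmas \ref{lemma: bd for x} and \ref{lemma: abs x}.
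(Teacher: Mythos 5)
Your proof is correct and essentially the same as the paper's: both arguments reduce everything to Lemmas \ref{lemma: bd for x} and \ref{lemma: abs x} (equivalently, the entrywise nonnegativity of $\l(I-\g P\r)^{-1}$), giving $\lv \l(I-\g P\r)^{-1}b \rv \le \l(I-\g P\r)^{-1}\lv b \rv \le k$ entrywise, then bound the sum over $i$ by discarding nonnegative cross terms so that it is dominated by $\ll \l(I-\g P\r)^{-1}\lv b \rv \rl_2^2 \le \ll k \rl_2^2$, and finish the third inequality with Cauchy--Schwarz. Your coordinate expansion for the second inequality is just an explicit rewriting of the paper's step $\sum_i \ll x_i \rl_2^2 \le \ll \sum_i \lv x_i \rv \rl_2^2$ with $x_i = \l(I-\g P\r)^{-1}\mathrm{diag}(e_i)\,b$, so the two proofs coincide in substance.
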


\begin{proof}
Note that \(\l( I-\g P \r)^{-1}\) is a matrix with non-negative entries, and therefore one has
\[
\lv \l( I-\g P \r)^{-1} b \rv \leq \l( I-\g P \r)^{-1} \lv b \rv.
\]
Then, Lemmas \ref{lemma: bd for x} leads to
\[
\lv \l( I-\g P \r)^{-1}b  \rv \leq \l( I-\g P \r)^{-1}\lv b \rv  \leq  k = \frac{1}{1-\g}\l( \l(1-\g\r)\lv b \rv + \g b_M {\bf 1}\r),
\]
where $b_M = \max_i |b_i|$, which is the first inequality in the corollary. 

The second inequality is
because
\[
\sum_i \ll \l( I-\g P \r)^{-1}\text{diag}(e_i)b \rl^2_2 \leq \ll \sum_i\lv \l( I-\g P
\r)^{-1}\text{diag}(e_i)b \rv \rl_2^2 \leq \ll \l( I-\g P \r)^{-1}\lv b\rv \rl^2_2 \leq \ll k
\rl^2_2.
\]

The third inequality is due to
\begin{align*}
&\sum_i \ll \l( I-\g P \r)^{-1}\text{diag}(e_i)b \rl_2 \\\leq & \sqrt{\lv \S \rv} \ll \sum_i\lv \l( I-\g
P \r)^{-1}\text{diag}(e_i)b \rv \rl_2 \\\leq & \sqrt{\lv \S \rv} \ll \l( I-\g P \r)^{-1}\lv b\rv \rl_2
\\\leq & \sqrt{\lv \S \rv}\ll k \rl_2.
\end{align*}

\end{proof}

Now we are ready to proof Theorem \ref{thm: upper bd}. 

\begin{proof} (\textbf{Proof of Theorem \ref{thm: upper bd}.})

Since $B_i$ is the covariance matrix, $B_i \succeq 0$. By the definition of $G,H$ in Theorem
\ref{thm: structure of Y'Y and Y^2}, one has $\tr\left(\mathrm{cov}\left[\hat{b}\right]
\l(I+G+H\r)\right)\geq 0$. By letting $a_i = \l( I-\g P \r)^{-1}\text{diag}(e_i)b$ and $d = \l( I-\g
P \r)^{-1}b$, 
\begin{equation}\label{eq: pf_6}
\begin{aligned}
  \v^\star \leq & \frac{\frac{1}{\g^2}b^\top b + \sum_i a_i^\top B_i d}{\frac{1}{\g^2}b^\top b + \sum_id^\top B_i d + 2\sum_i a_i^\top B_i d} = 1 - \frac{\sum_id^\top B_i d + \sum_i a_i^\top B_i d}{\frac{1}{\g^2}b^\top b + \sum_id^\top B_i d + 2\sum_i a_i^\top B_i d} \\
  =&1+\frac{\frac12\sum_i a_i^\top B_i a_i - \frac12\sum_i(a_i+d)^\top B_i (a_i+d) - \frac12\sum_id^\top B_i d}{\frac{1}{\g^2}b^\top b + \sum_i(a_i+d)^\top B_i (a_i+d) - \sum_i a_i^\top B_i a_i}
\end{aligned}
\end{equation}
The numerator of the second term in \eqref{eq: pf_6} can be bounded by
\[
\frac12\sum_i a_i^\top B_i a_i - \frac12\sum_i(a_i+d)^\top B_i (a_i+d) - \frac12\sum_id^\top B_i d\leq \frac12\lam_M\sum_i\ll a_i \rl_2^2 ,
\]
where $\lam_M$ is the largest eigenvalue of $B_i$ for $\forall i$.

Suppose that $\frac{\lam_M\ll k\rl^2_2}{\frac{1}{\g^2}b^\top b} \leq \frac12$. Then the denominator
of the second term of \eqref{eq: pf_6} can be lower bounded by
\begin{equation}\label{eq: bound on I + G+ H}
\frac{1}{\g^2}b^\top b + \sum_i(a_i+d)^\top B_i (a_i+d) - \sum_i a_i^\top B_i a_i \geq \frac{1}{\g^2}b^\top b - \lam_M\ll k\rl^2_2 > 0,
\end{equation}

where  $\sum_i \ll a_i \rl^2_2 \leq \ll k \rl^2_2$ from Corollary \ref{coro: ineq} is used. 
Therefore, \eqref{eq: pf_6} can be bounded by
\[
\v^\star \leq  1+ \frac{\frac12\lam_M \ll k\rl_2^2}{\frac{1}{\g^2}b^\top b - \lam_M\ll k\rl^2_2}
\leq 1 + \frac{\lam_M\ll k\rl^2_2}{\frac{1}{\g^2}b^\top b}.
\]
Note that
\begin{equation}\label{eq: pf_5}
    \begin{aligned}
\frac{\ll k \rl^2_2}{\frac{1}{\g^2}\ll b \rl^2_2} \leq&  \frac{\g^2}{\l(1-\g\r)^2}\frac{\l(\l(1-\g\r)^2\ll b\rl_2^2 + \lv \S \rv\g^2b_M^2 + 2\g\l(1-\g\r)b_M\ll b\rl_1\r)}{\ll b \rl^2_2} \\
\leq& \frac{\g^2}{\l(1-\g\r)^2}\l(\l(1-\g\r)^2 + \lv \S \rv\g^2\frac{b_M^2}{\ll b\rl_2^2} + 2\g\l(1-\g\r)\sqrt{\lv \S \rv}\frac{b_M}{\ll b \rl_2}\r) \\=& \frac{\g^2}{\l(1-\g\r)^2} \l(\l(1-\g\r) + \g\frac{\sqrt{\lv \S \rv}b_M}{\ll b \rl_2}\r)^2.
\end{aligned}
\end{equation}
where $\ll b \rl_1 \leq \sqrt{\lv \S \rv}\ll b \rl_2$ is used in the above inequality.  The largest
eigenvalue $\lam_M$ of $B_i$ defined in \eqref{eq: def of G and H} is smaller than $\lam_M <
\frac{p_M}{n}$, where $p_M = \max_{i,j} P_{i,j}$ is the maximum probability of the transition matrix
$P$ \cite{10.2307/2346179}. This implies that
\[
\frac{\lam_M\ll k \rl^2_2}{\frac{1}{\g^2}\ll b \rl^2_2} \leq \frac{p_M}{n} \frac{\g^2}{\l(1-\g\r)^2}
\l(\l(1-\g\r) + \g\frac{\sqrt{\lv \S \rv} b_M}{\ll b \rl_2}\r)^2 \le \frac12,
\]
\tcb{where the second inequality follows from the stronger assumption that $\frac{p_M}{n}\frac{\g^2}{\l(1-\g\r)^2}\l(\frac{\l(1-\g\r)}{\g} + \frac{\sqrt{\lv \S \rv} b_M}{\ll b \rl_2}\r)^2 \leq \frac{1}{2}$}. Therefore,
\[
\v^\star 
\leq   1+ \frac{p_M}{n} \frac{\g^2}{\l(1-\g\r)^2} \l(\l(1-\g\r) \g\frac{\sqrt{n} b_M}{\ll b \rl_2}\r)^2,
\]
which completes the proof for the upper bound of $\v^\star$.

\begin{bluetext}
We now move on to the condition for which \(\epsilon^{\star}>0\). From Theorem \ref{thm: structure of Y'Y and Y^2}, it suffices to show \(b^{\top}b + b^{\top}Hb/2 > 0\) and \(b^{\top}b + b^{\top}Hb +b^{\top}Gb > 0\) to obtain \(\epsilon^{\star} > 0\). As a consequence of \eqref{eq: bound on I + G+ H}, one has
\begin{align*}
&b^\top b + b^{\top}Hb +b^{\top}Gb \\=&b^\top b + \g^2\sum_i(a_i+d)^\top B_i (a_i+d) - \g^2\sum_i a_i^\top B_i a_i\\ 
\geq 
&b^\top b - \g^2\sum_i a_i^\top B_i a_i\\
\geq &b^\top b - \g^2\lam_M\ll k\rl^2_2.
\end{align*}

Hence the denominator term in \(\epsilon^{\star}\) is positive whenever \(b^\top b  > \g^2\lam_M\ll k\rl^2_2\), which holds when \(n > \g^2\lam_M\frac{\ll k\rl^2_2}{ \ll b\rl_{2}^2}\).

For \(b^{\top}b + b^{\top}Hb/2\), one has 
\begin{align*}
    &b^{\top}b + b^{\top}Hb/2
    = b^\top b + \g^2\sum_i a_i^\top B_i d 
    \geq b^\top b - \g^2 \ll \sum_i B_i a_i  \rl_{1} \ll d \rl_{\infty},
\end{align*}
where the inequality follows from Holder's inequality. Moreover, one has \(\ll d \rl_{\infty}  = \ll\l( I-\g
P \r)^{-1}b \rl_{\infty} \leq \ll\l( I-\g
P \r)^{-1} \rl_{\infty} \ll b \rl_{\infty} = \frac{b_M}{1-\gamma}\). Thus, it suffices to bound the term \(\ll \sum_i B_i a_i  \rl_{1}\). We will show that 
\begin{equation}\label{eq: bound on B_iai WTS}
\ll \sum_i B_{i} a_i  \rl_{1} \leq 2\frac{p_{M}}{n}\lv \S \rv\frac{1}{1-\g} b_{M}.
\end{equation}
Assuming \eqref{eq: bound on B_iai WTS} holds, one has
\[
    b^{\top}b + b^{\top}Hb/2 
    \geq b^\top b - \g^2 \ll \sum_i B_i a_i  \rl_{1} \ll d \rl_{\infty}  
    \geq b^\top b - 2\g^{2}\frac{1}{n}p_{M}\lv \S\rv\frac{1}{(1-\gamma)^2}b_{M}^2,
\]
which implies the numerator term \(b^{\top}b + b^{\top}Hb/2\) is positive whenever 
\[
n > 2\g^{2}p_{M}\lv \S\rv\frac{1}{(1-\gamma)^2}\frac{b_{M}^2}{\ll b \rl_{2}^2}.
\]

Thus, for \(\epsilon^{\star} > 0\), one needs
\[
n > \max{\l(\g^2\lam_M\frac{\ll k\rl^2_2}{ \ll b\rl_{2}^2}, 2\g^{2}p_{M}\lv \S\rv\frac{1}{(1-\gamma)^2}\frac{b_{M}^2}{\ll b \rl_{2}^2}\r)}.
\]
By \eqref{eq: pf_5}, one can see that \( n > \g^2\lam_M\frac{\ll k\rl^2_2}{ \ll b\rl_{2}^2}\) holds if $\frac{p_M}{n}\frac{\g^2}{\l(1-\g\r)^2}\l(\l(1-\g\r) + \g\frac{\sqrt{\lv \S \rv} b_M}{\ll b
\rl_2}\r)^2 \leq \frac{1}{2}$. Moreover, the condition that \(n >  2\g^{2}p_{M}\lv \S\rv\frac{1}{(1-\gamma)^2}\frac{b_{M}^2}{\ll b \rl_{2}^2}\) is simply a restatement of the condition that $\frac{p_M}{n}\frac{\g^{2}}{(1-\gamma)^2}\l(\frac{\sqrt{\lv \S\rv}b_{M}}{\ll b \rl_{2}}\r)^2 < \frac{1}{2}$.

Then, since \(\g \in (0,1)\), the theorem's assumption $\frac{p_M}{n}\frac{\g^2}{\l(1-\g\r)^2}\l(\frac{\l(1-\g\r)}{\g} + \frac{\sqrt{\lv \S \rv} b_M}{\ll b \rl_2}\r)^2 \leq \frac{1}{2}$ implies
$\frac{p_M}{n}\frac{\g^2}{\l(1-\g\r)^2}\l(\l(1-\g\r) + \g\frac{\sqrt{\lv \S \rv} b_M}{\ll b \rl_2}\r)^2 \leq \frac{1}{2}$ and $\frac{p_M}{n}\frac{\g^{2}}{(1-\gamma)^2}\l(\frac{\sqrt{\lv \S\rv}b_{M}}{\ll b \rl_{2}}\r)^2 < \frac{1}{2}$. Therefore, the theorem's assumption implies \(\epsilon^{\star} > 0\).

For the remainder of the proof, we show that \eqref{eq: bound on B_iai WTS} holds.
 Define \(B_{i,+} = \frac{1}{n}\diag{p_{i}}\) and \(B_{i,-} =  -\frac{1}{n}p_{i}p_{i}^{\top}\). It follows that \(B_{i} = B_{i,+} + B_{i,-}\), and one has
\begin{equation}\label{eqn: bound on B_iai}
\ll \sum_i B_i a_i  \rl_{1} \leq \ll \sum_i B_{i,+} a_i  \rl_{1} + \ll \sum_i B_{i,-} a_i  \rl_{1}.
\end{equation}

We first bound the first term on the right hand side of \eqref{eqn: bound on B_iai}. Let $b_+, b_-$ be the positive and negative parts of $b$ as in Lemma \ref{lemma: abs x}, and define \[a_{i,+} := \l( I-\g P \r)^{-1}\text{diag}(e_i)b_{+},\quad a_{i,-} := \l( I-\g P \r)^{-1}\text{diag}(e_i)b_{-}.\]
In particular, one has \(a_{i,+} \geq {\bf 0}\) and \(a_{i,+} \leq {\bf 0}\), as \(\l( I-\g P \r)^{-1}\text{diag}(e_i)\) has only non-negative entries.
Thus one can further bound by 
\begin{equation*}
\ll \sum_i B_{i,+} a_i  \rl_{1} \leq \ll \sum_i B_{i,+} a_{i,+}  \rl_{1} + \ll \sum_i B_{i,+} a_{i,-}  \rl_{1}.
\end{equation*}

Due to the non-negativity of the entries in \((-a_{i,-})\) and \(a_{i,+}\), the right hand side is a monotonically non-decreasing function in the entries of \(B_{i,+}\), and therefore one has 
\begin{align*}
&\ll \sum_i B_{i,+} a_{i,+}  \rl_1 + \ll \sum_i B_{i,+} a_{i,-}  \rl_1 \\
\leq &\ll \frac{1}{n}\diag{\l(p_{M} {\bf 1}\r)} \sum_i a_{i,+}  \rl_1 + \ll  \frac{1}{n}\diag{\l(p_{M} {\bf 1}\r)} \sum_i a_{i,-}  \rl_1 \\
= &\frac{p_{M}}{n}\l(\ll \l( I-\g P \r)^{-1}b_{+}  \rl_1 + \ll \l( I-\g P \r)^{-1}b_{-}  \rl_1 \r).
\end{align*}
Now, note that \(\ll \l( I-\g P \r)^{-1}b_{+}  \rl_1 = {\bf 1}^{\top}\l( I-\g P \r)^{-1}b_{+} \), and \(\ll \l( I-\g P \r)^{-1}b_{-}  \rl_1 = -{\bf 1}^{\top}\l( I-\g P \r)^{-1}b_{-} \). Therefore, \[\ll \l( I-\g P \r)^{-1}\lv b\rv   \rl_1 = {\bf 1}^{\top}\l( I-\g P \r)^{-1}\lv b\rv = {\bf 1}^{\top}\l( I-\g P \r)^{-1}\l(b_{+} - b_{-}\r).\]

Hence one has
\begin{align*}
&\ll \sum_i B_{i,+} a_{i,+}  \rl_1 + \ll \sum_i B_{i,+} a_{i,-}  \rl_1 \\
\leq &\frac{p_{M}}{n}\l(\ll \l( I-\g P \r)^{-1}b_{+}  \rl_1 + \ll \l( I-\g P \r)^{-1}b_{-}  \rl_1 \r)\\
    = &\frac{p_{M}}{n}\ll \l( I-\g P \r)^{-1}\lv b \rv  \rl_1\\
\leq &\frac{p_{M}}{n} \lv \S \rv\ll \l( I-\g P \r)^{-1}\lv b \rv\rl_{\infty}\\
\leq &\frac{p_{M}}{n}\lv \S \rv\frac{1}{1-\g} b_{M}.
\end{align*}

For the second term in \eqref{eqn: bound on B_iai}, one similarly has
\[ \ll \sum_i B_{i,-} a_{i}  \rl_1 \leq \ll \sum_i B_{i,-} a_{i,-}  \rl_1 +  \ll \sum_i B_{i,-} a_{i,+}  \rl_1.\]

One can check 
\begin{align*}
&\ll \sum_i B_{i,-} a_{i,-}  \rl_1 +  \ll \sum_i B_{i,-} a_{i,+}  \rl_1 \\= &{\bf 1}^{\top}\sum_{i}\frac{1}{n}p_{i}p_{i}^{\top}\l( I-\g P \r)^{-1}\text{diag}(e_i)\lv b  \rv\\
=
&\sum_{i}\frac{1}{n}p_{i}^{\top}\l( I-\g P \r)^{-1}\text{diag}(e_i)\lv b  \rv\\
\leq 
&\sum_{i}\frac{p_{M}}{n}{\bf 1}^{\top}\l( I-\g P \r)^{-1}\text{diag}(e_i)\lv b  \rv
\\
=
&\frac{p_{M}}{n}{\bf 1}^{\top}\l( I-\g P \r)^{-1}\lv b  \rv\\
\leq 
&\frac{p_{M}}{n} \ll \l( I-\g P \r)^{-1}\lv b  \rv \rl_{1}\\
\leq &\frac{p_{M}}{n}\lv \S\rv\frac{1}{1-\gamma}b_{M}.
\end{align*}
Thus \(\ll \sum_i B_i a_i  \rl_{1} \leq 2\frac{p_{M}}{n}\lv \S\rv\frac{1}{1-\gamma}b_{M}\).

\end{bluetext}

\end{proof}

\bibliographystyle{plain}

\newpage

\appendix
\section*{Appendices}
\addcontentsline{toc}{section}{Appendices}
\renewcommand{\thesubsection}{\Alph{subsection}}
\subsection{Condition for Convergence of Neumann Series.}\label{appendix: condition for convergence of Neumann Series}

\setcounter{equation}{0}
\setcounter{theorem}{0}
\renewcommand\theequation{A.\arabic{equation}}
\renewcommand\thetheorem{A.\arabic{theorem}}

The spectral radius $\rho(\hY)$ of $\hY$ can be bounded by the size of state space $\lv \S \rv$, the
number of samples $n$ used to learn the model $P$ and {the number of possible transitions}. We
define $\k$ as the largest number of transitions among all states,
\begin{equation}\label{def of kappa}
  \k = \max_{s\in\S} \{k: k = \ll P_s \rl_0, P_s \text{ is the $s$-th row of $P$}\}.
\end{equation}
The following lemma gives the condition for $\rho(\hY) < 1$ with high probability. The proof relies on
the concentration inequality of $l_{1}$-norm of the multinomial distribution.
\begin{lemma}\label{lemma: sample number}
  Under Assumption 1, for any $C>0$ and any positive integer $q>1$, if $n \geq \frac{2C^2\g^2
    \kappa}{\l(1-\g\r)^2}\log{\l( 2\lv \S \rv^{q} \r)}$,
  \begin{equation*}
    \mathbb{P}\left[\p(\hY)  < \frac{1}{C}\right] \geq 1 - \frac{1}{\lv \S \rv^{q-1}}.
  \end{equation*}
\end{lemma}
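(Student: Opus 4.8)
The plan is to replace the spectral radius by an easily estimated matrix norm and then invoke a concentration inequality for the empirical multinomial distribution. First I would use that the spectral radius is dominated by any induced operator norm, together with submultiplicativity, to write
\[
  \rho(\hY) \le \ll \hY \rl_\infty = \ll \hZ A^{-1} \rl_\infty \le \ll \hZ \rl_\infty \, \ll A^{-1} \rl_\infty .
\]
As already noted in the proof of Theorem~\ref{thm: lower bd}, every transition matrix satisfies $\ll P \rl_\infty = 1$, hence $\ll A^{-1}\rl_\infty = \ll (I-\g P)^{-1}\rl_\infty = \tfrac{1}{1-\g}$ (the row sums of the nonnegative matrix $A^{-1}$ all equal $\tfrac{1}{1-\g}$ because $A^{-1}\mathbf{1} = \tfrac{1}{1-\g}\mathbf{1}$), while $\ll \hZ\rl_\infty = \g\ll \hat P - P\rl_\infty = \g\max_{i\in\S}\ll \hat P_i - P_i\rl_1$. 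Consequently
\[
  \rho(\hY) \le \frac{\g}{1-\g}\,\max_{i\in\S}\ll \hat P_i - P_i\rl_1 ,
\]
so it suffices to prove that $\max_i \ll \hat P_i - P_i\rl_1 < \tfrac{1-\g}{C\g}$ with probability at least $1 - \lv\S\rv^{-(q-1)}$.

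Next I would bound each row separately. Under Assumption~1, $\hat P_i$ is the empirical frequency vector of $n$ i.i.d. draws from the distribution $P_i$, which, by the definition of $\k$ in \eqref{def of kappa}, is supported on at most $\k$ coordinates (and so is $\hat P_i - P_i$, since a coordinate with $P_{ij}=0$ is never visited). Writing $\ll \hat P_i - P_i\rl_1 = \max_{\sigma\in\{\pm1\}^{\k}} \sigma^\top(\hat P_i - P_i)$ and noting that for a fixed $\sigma$ the quantity $\sigma^\top\hat P_i$ is an average of $n$ i.i.d. random variables taking values in $\{\pm1\}$, Hoeffding's inequality gives $\mathbb{P}[\sigma^\top(\hat P_i - P_i) \ge \lambda] \le e^{-n\lambda^2/2}$; a union bound over the $2^{\k}$ sign patterns yields the Bretagnolle--Huber--Carol bound
\[
  \mathbb{P}\!\left[\ll \hat P_i - P_i\rl_1 \ge \lambda\right] \le 2^{\k}\, e^{-n\lambda^2/2}, \qquad \lambda>0 .
\]
Taking $\lambda = \tfrac{1-\g}{C\g}$ and a further union bound over the $\lv\S\rv$ states gives
\[
  \mathbb{P}\!\left[\rho(\hY) \ge \tfrac{1}{C}\right] \le \lv\S\rv\, 2^{\k}\exp\!\left(-\frac{n(1-\g)^2}{2C^2\g^2}\right).
\]

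Finally I would verify that the hypothesis on $n$ forces the right-hand side below $\lv\S\rv^{-(q-1)}$. Substituting $n \ge \frac{2C^2\g^2\k}{(1-\g)^2}\log(2\lv\S\rv^q)$ gives $\exp\!\big(-\tfrac{n(1-\g)^2}{2C^2\g^2}\big) \le (2\lv\S\rv^q)^{-\k} = 2^{-\k}\lv\S\rv^{-q\k}$, so the bound becomes $\lv\S\rv\cdot 2^{\k}\cdot 2^{-\k}\lv\S\rv^{-q\k} = \lv\S\rv^{\,1-q\k} \le \lv\S\rv^{\,1-q}$, using $\k\ge 1$, which is the claimed probability. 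The only non-routine ingredient is the probabilistic step: one needs a tail bound for the $\ell_1$ deviation of an empirical multinomial whose exponential prefactor scales with the true support size $\k$ (and not with $\lv\S\rv$), i.e. the Bretagnolle--Huber--Carol inequality; the norm comparisons, the reduction $\rho \le \ll\cdot\rl_\infty$, the two union bounds, and the arithmetic of plugging in $n$ are all standard.
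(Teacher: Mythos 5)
Your proposal is correct and follows essentially the same route as the paper: bound $\rho(\hY)\le \ll \hY\rl_\infty \le \frac{\g}{1-\g}\max_i \ll \hat P_i - P_i\rl_1$, apply a per-row $\ell_1$ concentration bound for the empirical multinomial with support size $\k$, take a union bound over the $\lv\S\rv$ states, and plug in the assumed sample size. The only difference is cosmetic: the paper cites the Weissman et al.\ (Bretagnolle--Huber--Carol type) inequality directly, whereas you re-derive it via Hoeffding plus a union bound over the $2^{\k}$ sign patterns, which is a valid, self-contained substitute and even yields a marginally sharper final exponent.
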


\begin{proof}
We have
\begin{equation*}
  \p(\hY) \leq \ll \hY \rl_{\infty} \leq \ll \hZ\rl_{\infty}\ll A^{-1} \rl_{\infty} = \frac{\g}{1-\g} \ll P - \hat{P} \rl_{\infty}.
\end{equation*}

By the concentration inequality in \cite{weissman2003inequalities,
  DBLP:journals/corr/abs-2001-11595}, for arbitrary $r \in [0,1]$
\begin{equation}\label{eqn: consistency assumption for row of P}
  \mathbb{P}\left[ \ll e_{i}^{\top} \left(\hat{P} - P  \right) \rl_1 \geq \frac{\sqrt{2\kappa\log{2/r}}}{\sqrt{n}}  \right] \leq r.
\end{equation}
Taking union bound and setting $r = \frac{1}{\lv \S \rv^{q}}$ leads to
\begin{equation*}
  \mathbb{P}\left[\ll P - \hat{P} \rl_{\infty} \geq \frac{\sqrt{2\kappa\log{2\lv \S
          \rv^{q}}}}{\sqrt{n}} \right] \leq 1 - \l(1-1/\lv \S \rv^{q}\r)^{\lv \S \rv} \leq 1/\lv \S
  \rv^{q-1},
\end{equation*}
where the second inequality is by Bernoulli's inequality: for \(r \geq 1\) and \(x \leq 1\),
\[
(1-x)^{r} \geq 1-rx.
\]
The proof is completed
by noticing
\begin{equation*}
  \frac{\sqrt{2\kappa\log{2\lv \S \rv^{q}}}}{\sqrt{n}} \leq \frac{1}{C} \iff n \geq \frac{2C^2\g^2
    \kappa}{\l(1-\g\r)^2}\log{\l( 2\lv \S \rv^{q} \r)}
\end{equation*}
\end{proof}

\begin{remark}\label{rmk: kappa}
  In particular, our goal is to show a bound of $n$ to ensure that $\rho(\hY) < 1$ with high
   probability. In this case, the sample size requirement is
   \begin{equation*}
     n \geq \frac{2\g^2 \kappa}{\l(1-\g\r)^2}\log{\l( 2\lv \S \rv^{q} \r)}.
   \end{equation*}
   The requirement of sample size $n$ only grows at the rate of $O\l(\kappa\log(\lv \S
   \rv)\r)$. Even though $\kappa$ may grow proportionally to $\lv \S \rv$, one can generally assume
   that $\kappa$ grows sublinearly with respect to $\lv \S \rv$. In practice, the numerical examples
   are more well-behaved if $\kappa$ is large, and usually the convergence of the Taylor series
   needs only $n = 1$. The bound on the spectral radius is intended for ill-behaved MDP with small
   $\kappa$.
\end{remark}



          \end{document}